\documentclass[10pt,journal,compsoc]{IEEEtran}
\usepackage[utf8]{inputenc} 
\usepackage[T1]{fontenc}    
\usepackage{multirow}
%
\usepackage{algorithm,algorithmic}

\usepackage{makecell}
\usepackage{threeparttable}

\usepackage{epsfig}

\usepackage{booktabs}       

\usepackage[dvipsnames, svgnames, x11names, table]{xcolor}
\usepackage{arydshln}
\usepackage{colortbl}
\usepackage{amsmath}
\usepackage{amssymb}
\usepackage{amsthm}
\usepackage{pifont}
\usepackage{subfig}
\usepackage{multirow}

\usepackage{hyperref}

\definecolor{red2}{RGB}{240,191,211}
\definecolor{deepred}{RGB}{219,95,146}
\definecolor{blue2}{RGB}{205,226,247}
\definecolor{deepblue}{RGB}{88,161,230}

\theoremstyle{plain}
\newtheorem{theorem}{Theorem} 
  
\newtheorem{lemma}[theorem]{Lemma}

\newtheorem{definition}{Definition}

\definecolor{mygray}{gray}{.9}
\definecolor{myred}{HTML}{C81D31}
\definecolor{myblue}{HTML}{4874CB}

\makeatletter
\def\hlinew#1{%
\noalign{\ifnum0=`}\fi\hrule \@height #1 \futurelet
\reserved@a\@xhline}
\makeatother%

\begin{document}
\title{Adaptive Point-Prompt Tuning: Fine-Tuning Heterogeneous Foundation Models for \\3D Point Cloud Analysis} 

\author{Mengke Li,
        Lihao Chen,
        Peng Zhang,
        Yiu-ming~Cheung,~\IEEEmembership{Fellow,~IEEE,}
        Hui Huang* ~\IEEEmembership{Senior Member,~IEEE}
\IEEEcompsocitemizethanks{
\IEEEcompsocthanksitem Mengke Li is with the College of Computer Science and Software Engineering, Shenzhen University, Shenzhen, China (e-mail: mengkejiajia@hotmail.com). 
\IEEEcompsocthanksitem Lihao Chen is with Guangdong Laboratory of Artificial Intelligence and Digital Economy (SZ), Shenzhen, China (e-mail: clihao254@gmail.com). 
\IEEEcompsocthanksitem Peng Zhang is with National Laboratory of Radar Signal Processing, Xidian University, Xi'an, China (e-mail: pzhang@xidian.edu.cn). 
\IEEEcompsocthanksitem Yiu-ming Cheung is with the Department of Computer Science, Hong Kong Baptist University, Hong Kong SAR, China (e-mail: ymc@comp.hkbu.edu.hk). 
\IEEEcompsocthanksitem Hui Huang is the corresponding author with the College of Computer Science and Software Engineering, Shenzhen University, Shenzhen, China (e-mail: hhzhiyan@gmail.com).
}}

\markboth{Submitted to IEEE Transactions on Pattern Analysis and Machine Intelligence}{M. Li \MakeLowercase{\textit{et al.}}: 
Adaptive Point-Prompt Tuning: Adapting Heterogeneous Foundation Models for 3D Point Cloud Analysis}

\maketitle
\begin{abstract}
Parameter-efficient fine-tuning strategies for foundation models in 1D textual and 2D visual analysis have demonstrated remarkable efficacy.
However, due to the scarcity of point cloud data, pre-training large 3D models remains a challenging task. 
While many efforts have been made to apply pre-trained visual models to 3D domains through "high-to-low" mapping, these approaches often lead to the loss of spatial geometries and lack a generalizable framework for adapting any modality to 3D.
This paper, therefore, attempts to directly leverage point features to calibrate the heterogeneous foundation model of any modality for 3D point cloud analysis. 
Specifically, we propose the Adaptive Point-Prompt Tuning (APPT) method, which fine-tunes pre-trained models with a modest number of parameters, enabling direct point cloud processing without heterogeneous mappings.
We convert raw point clouds into point embeddings by aggregating local geometry to capture spatial features followed by linear layers to ensure seamless utilization of frozen pre-trained models.
Given the inherent disorder of point clouds, in contrast to the structured nature of images and language, we employ a permutation-invariant feature to capture the relative positions of point embeddings, thereby obtaining point tokens enriched with location information to optimize self-attention mechanisms.
To calibrate self-attention across source domains of any modality to 3D and reduce computational overhead, we introduce a prompt generator that shares weights with the point embedding module, dynamically producing point-prompts without adding additional parameters.
These prompts are then concatenated into a frozen foundation model, providing rich global structural information and compensating for the lack of structural context in the heterogeneous data.
Extensive experiments on multiple benchmarks demonstrate that our APPT is effective for various downstream tasks in point cloud analysis while achieving high efficiency by fine-tuning only 3.8\% of the trainable parameters. 
The source code and additional details are available at \url{https://github.com/wish254/APPT}.
\end{abstract}

\begin{IEEEkeywords}
Point cloud analysis, 3D vision, parameter-efficient fine-tuning, fine-tuning foundation models.
\end{IEEEkeywords}

\section{Introduction}
\label{sec:intro}

Parameter-efficient fine-tuning (PEFT)~\cite{chen2020tuning, HuSWALWWC22LoRA, chen2022adaptformer, yu2023visual} has emerged as a widely adopted strategy for leveraging the rich semantic features and representation capabilities of large foundation models across diverse downstream tasks, while simultaneously reducing computational and storage costs~\cite{liu2023pre}.
This progress has been particularly notable in the fields of natural language processing (NLP) \cite{devlin2018bert, Floridi2020GPT-3} and computer vision (CV)~\cite{Dosovitskiy21vit, radford2021clip, Oquab2023DINOv2LR}, where the growing availability of training data has led to the continuous emergence of pre-trained foundation models.
However, 3D visual understanding~\cite{guo2020deep}, as an important research topic, faces significantly greater challenges in data acquisition compared to NLP and CV. 
This results in a lack of large-scale foundation models for 3D tasks. 
Although several 3D pre-trained models, such as Point-BERT~\cite{yu2022point}, OcCo~\cite{wangHC2021occo}, and PointGPT~\cite{NEURIPS2023pointgpt}, have shown promising results, their scale remains incomparable to models trained on image or text data. 
For instance, the 3D foundation model, PointGPT-L~\cite{NEURIPS2023pointgpt} is pre-trained on a multi-source dataset containing approximately 3 million point clouds, whereas the visual-linguistic model CLIP~\cite{radford2021clip} is trained on 400 million image-text pairs.
Acquiring and annotating real high-quality 3D data requires significant resources and human labor, and synthetic 3D data often lacks distribution diversity and real-world applicability~\cite{tang2025any2point}. 
These limitations raise the question of whether prior knowledge from 2D or 1D data can be effectively leveraged for the analysis of 3D point clouds.

\begin{figure}[!t]
    \centering
    \includegraphics[width=\linewidth]{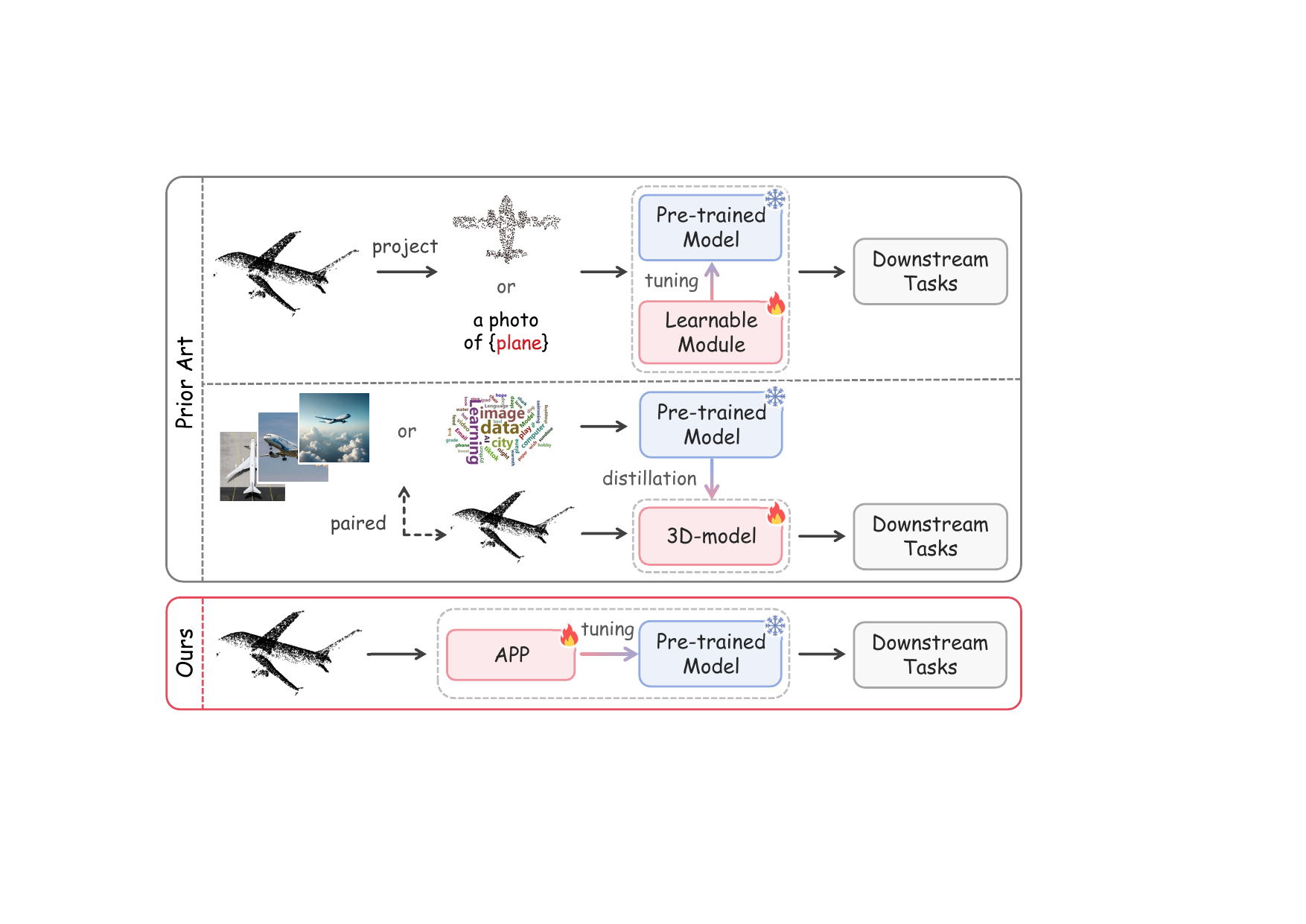}
    \caption{Comparison between existing methods and our proposed adaptive point-prompt (APP) tuning.}
    \label{fig:intro}
\end{figure}

Previous work has demonstrated the feasibility of transferring prior knowledge from heterogeneous data to 3D point cloud analysis, typically following two main routes.
1) Modality projection~\cite{Ziyi21P2P,Zhang2023Flattening-Net,Wang2024point-to-pixel,XuRS2024PointLLM, tang2025any2point} involves projecting 3D point clouds into lower-dimensional modalities, such as 1D linguistic or 2D visual representations, to leverage the pre-trained foundation models. 
However, directly projecting 3D point clouds onto 1D/2D data inevitably results in the loss of high-dimensional information.
Recently, Tang et al.~\cite{tang2025any2point} have proposed Any2Point, which virtually projects 3D coordinates to 2D (or 1D) space to utilize the position embedding of pre-trained large models.
This approach mitigates the issue of dimensional information loss by assigning positional embeddings compatible with the pre-trained model to 3D tokens. 
Nonetheless, it still relies on low-dimensional projections to exploit prior knowledge and does not fully adapt the pre-trained self-attention mechanism to the 3D domain.
2) Knowledge distillation~\cite{ZhangI2PMAE23,Liu2023OpenShape, Umam2024PartDistill,xue2023ulip,Xue2024ulip2} facilitates the training of specialized 3D models by transferring knowledge from pre-trained models trained on heterogeneous data.
However, these methods not only require training 3D models from scratch but also heavily rely on large-scale paired 2D and/or 1D-3D data. 
Their data dependencies require extensive engineering efforts, ultimately limiting their efficiency and generalization capacity.

To address these challenges, we propose a novel approach, Adaptive Point-Prompt Tuning (APPT), which directly leverages point features to adapt heterogeneous foundation models to the 3D modality, thereby optimizing the utilization of high-dimensional point cloud information while reducing computational costs.
In contrast to modality projection methods, the proposed APPT, as shown in Fig.~\ref{fig:intro}, directly processes point clouds and effectively preserves 3D information.
Specifically, APPT encodes point embeddings using farthest point sampling, k-nearest neighbors, pooling operations~\cite{QiNIPS2017pointnet2}, and local geometry aggregation to effectively handle unordered data and capture spatial features.
A linear operation is incorporated into a point embedding module to calibrate dimensionality, ensuring seamless alignment with pre-trained large models.
To enhance robustness against point permutations and capture geometric and semantic relationships between point embeddings, we exploit permutation-invariance~\cite{zaheer2017deep} for relative position injection into the token generation process.
The prompt tuning strategy~\cite{yu2023visual} is employed to adaptively fine-tune the self-attention mechanism in pre-trained models.
Notably, the prompt is a global representation generated by a point generator that shares weights with the point embedding module, followed by a pooling operation. 
As a result, the point embedding module is the only trainable component, facilitating the adaptation of pre-trained models from source modalities without the need to train an entire 3D network, thereby significantly enhancing computational efficiency.
By integrating point cloud information with the heterogeneous semantic priors from pre-trained models, APPT effectively addresses a variety of downstream 3D point cloud analysis tasks.
Extensive experiments on benchmark datasets demonstrate that the proposed APPT consistently surpasses the existing methods across various downstream tasks.
In summary, our main contributions are as follows.
\begin{itemize}
    \item We investigate the potential of pre-trained models on heterogeneous data for 3D point cloud analysis without dimension reduction and propose the APPT method to effectively leverage such models. 
    Our method demonstrates that rich 2D or 1D priors can offer valuable knowledge for the 3D domain, and with minimal fine-tuning, it can outperform models trained exclusively on 3D data.
    \item We propose a position injector (PosIn) that encodes position information with negligible training parameters.
    The concept of permutation-invariant features is introduced to identify an embedding centroid, ensuring invariance across tokens and allowing the model to remain unaffected by the order of points and tokens. 
    PosIn directs the model focus on underlying relationships and dependencies, rather than the order of points, thereby enhancing the applicability of pre-trained models.
    \item We propose a novel point-prompt generator that shares weights with the point embedding module and includes a permutation-invariant operation for obtaining order-independent global representations. 
    This generator enables direct fine-tuning of heterogeneous pre-trained models for point cloud analysis, eliminating the need for lossy mappings or time-consuming training.
    \item The proposed APPT outperforms the existing methods, as demonstrated through extensive experiments on a variety of 3D downstream tasks. 
    These experiments utilize a range of pre-trained large models, including both linguistic and visual models, consistently achieving superior performance while fine-tuning only 3.8\% of the parameters.
\end{itemize}

A preliminary version of this work has been published in~\cite{Li2024APF}. 
This paper has four major improvements. 
First, we enhance the point embedding module by incorporating local geometry aggregation and linear operations, instead of using the entire PointNet or PointMLP. This approach better handles unordered data, captures richer contextual information, and further reduces computational overhead.   
Second, we improve the fine-tuning strategy with the prompt generator, making it a plug-and-play module compatible with various pre-trained models.  
Third, we replace the sequencing operation in \cite{Li2024APF} with a position injector that has permutation-invariance property across point embeddings to enhance the feature representation of point clouds and mitigate the impact of irrelevant location information.
The ablation study demonstrates that the simple yet effective modules in APPT lead to significant improvements.
Finally, we extend the foundation model from a 2D-only model to various pre-trained foundation models, including visual, textual, and audio models, to demonstrate the effectiveness and generalization of the proposed method across diverse pre-trained knowledge sources. 
The proposed APPT consistently outperforms existing state-of-the-art methods.

\section{Related work}
\label{sec:related_work}

\subsection{MLP/CNN-based 3D Specialized Model}
Since the introduction of PointNet~\cite{qi2017pointnet}, deep learning-based approaches for point cloud processing have experienced rapid development in recent years.
These methods can be categorized into three groups based on the representations of point clouds: voxel-based~\cite{Liu2019pointvoxel,Shi2020CVPR}, projection-based~\cite{ranH2022surface,li2023bevdepth}, and point-based~\cite{guo2020deep,qianG2022pointnext}.  
Voxel-based methods entail the voxelization of input points into regular voxels, utilizing CNNs for subsequent processing. 
However, these methods tend to incur substantial memory consumption and slower runtime, particularly when a finer-grained representation is required~\cite{guo2020deep}.
Projection-based methods involve converting point clouds into dense 2D grids, which are then treated as a regular image.
This transformation enables the application of classical image-processing techniques to tackle challenges in point cloud analysis.
However, these methods heavily rely on projection and back-projection processes, presenting challenges, particularly in urban scenes with diverse scales in different directions.
In contrast, point-based methods, directly applied to 3D point clouds, are the most widely adopted. 
Such methods commonly employ shared multi-layer perceptrons or incorporate sophisticated convolution operators~\cite{qi2017pointnet,QiNIPS2017pointnet2,Wang2019Dynamic,ThomasH2019KPConv}. 
In recent years, hybrid methods such as PVCNN~\cite{Liu2019pointvoxel} and PV-RCNN~\cite{Shi2020CVPR}, which combine the strengths of diverse techniques, have achieved notable advancements.

\subsection{Self-Attention-based Specialized 3D Model}
Self-attention operations~\cite{vaswani2017attention} have been adopted for point cloud processing in several studies~\cite{zhao2021point,guoMH2021pct,choe2022pointmixer}. 
The point Transformer~\cite{zhao2021point} and point cloud Transformer (PCT)~\cite{guoMH2021pct} have introduced self-attention networks~\cite{vaswani2017attention} to improve the capture of local context within the point clouds.
Afterward, a plethora of methods based on the self-attention architecture have been proposed, which can be categorized into point-based~\cite{choe2022pointmixer,wu2022ptv2, duan2023condaformer, han2024mamba3d, wu2024ptv3}, heterogeneous auxiliary information-based~\cite{wang2022multimodal,ren2024pointofview}, and homogeneous auxiliary information-based~\cite{li2023ashapeformer, Zheng2024Diffusion,tang2024PointPEFT} methods.
Point-based methods structure point clouds by sorting them according to specific patterns, transforming unstructured, irregular point clouds into manageable sequences while preserving spatial proximity. 
This approach emulates token sequences in NLP, allowing the use of the self-attention mechanism. 
Heterogeneous auxiliary information-based methods integrate supplementary data from diverse sources (e.g., images, semantic labels) to enhance the understanding and performance of 3D point cloud tasks through multi-modal fusion and cross-modal learning techniques. 
For example, tokenFusion~\cite{wang2022multimodal} initially fuses tokens from point clouds and images, subsequently forwarding the fused tokens to a shared Transformer network, allowing the learning of correlations among multimodal features.
However, these methods suffer from high memory consumption and computational complexity~\cite{wu2022ptv2}, as they require training the entire network from scratch.
Homogeneous auxiliary information-based methods introduce 3D pre-trained models.
By fine-tuning existing pre-trained models, their performance on 3D-related tasks can be significantly improved, while computational costs can be effectively reduced.
For example, Point-Bert~\cite{yu2022point}, Point-MAE~\cite{pangYT22PointMAE}, and PointM2AE~\cite{NEURIPS2022PointM2AE} integrate masking techniques with pre-trained 3D models, enhancing the ’s generalization of models to unseen data while requiring less task-specific training.
However, compared to image data, point cloud data is more difficult to acquire, and the capability of 3D pre-trained models is relatively weaker.

\subsection{Point Cloud Analysis with 2D Foundation Model}
Leveraging knowledge from 2D to 3D seeks to strengthen the 3D understanding and improve the accuracy of 3D downstream tasks by utilizing the rich contextual information and prior knowledge embedded in pre-trained 2D models.
Most current research~\cite{Ziyi21P2P, Wang2024point-to-pixel, Zhang2023Flattening-Net,zhang2022pointclip,zhu2023pointclipv2,wei2020view-Gcn} relies on 3D-to-2D projection.
In this approach, the tokens derived from the 3D point cloud data are projected onto 2D planes, after which an existing 2D pre-trained model is employed to efficiently process the projected tokens.
While this method has proven effective, projecting 3D data to 2D introduces several challenges, such as the loss of 3D spatial information, limited handling of complex geometries, and dependency on projection angles~\cite{yang2020predicting,tang2025any2point}, to name a few.
To address these issues, several studies focus on minimizing the information loss from high-dimensional to low-dimensional representations.
For example, Any2Point~\cite{tang2025any2point} proposes a virtual projection technique to map point clouds onto 1D or 2D planes.
Nevertheless, these methods still cannot directly process 3D data.
Cross-modality knowledge distillation methods~\cite{yu2022data, xue2023ulip, dong2023act, Xue2024ulip2} typically transfer the knowledge learned by a 2D model to a smaller 3D network, enabling data-efficient training while being 3D-specific. 
The 3D model benefits from the rich prior knowledge acquired by the 2D/1D model. 
For example, ACT~\cite{dong2023act} employs pre-trained visual or language models to assist in 3D representation learning, serving as a cross-modal teacher, which enables the student model for point clouds to be trained with enhanced representational capacity.
ULIP~\cite{xue2023ulip} and ULIP-2~\cite{Xue2024ulip2} leverage the vision-language models pre-trained on large-scale image-text pairs, aligning the feature space of a point cloud encoder with the pre-aligned vision/language feature space.
However, the dependence on paired 1D/2D-3D data limits the flexibility of these methods.

\section{Adaptive Point-Prompt Tuning}

\begin{figure*}[t]
        \centering
        \includegraphics[width=\linewidth]{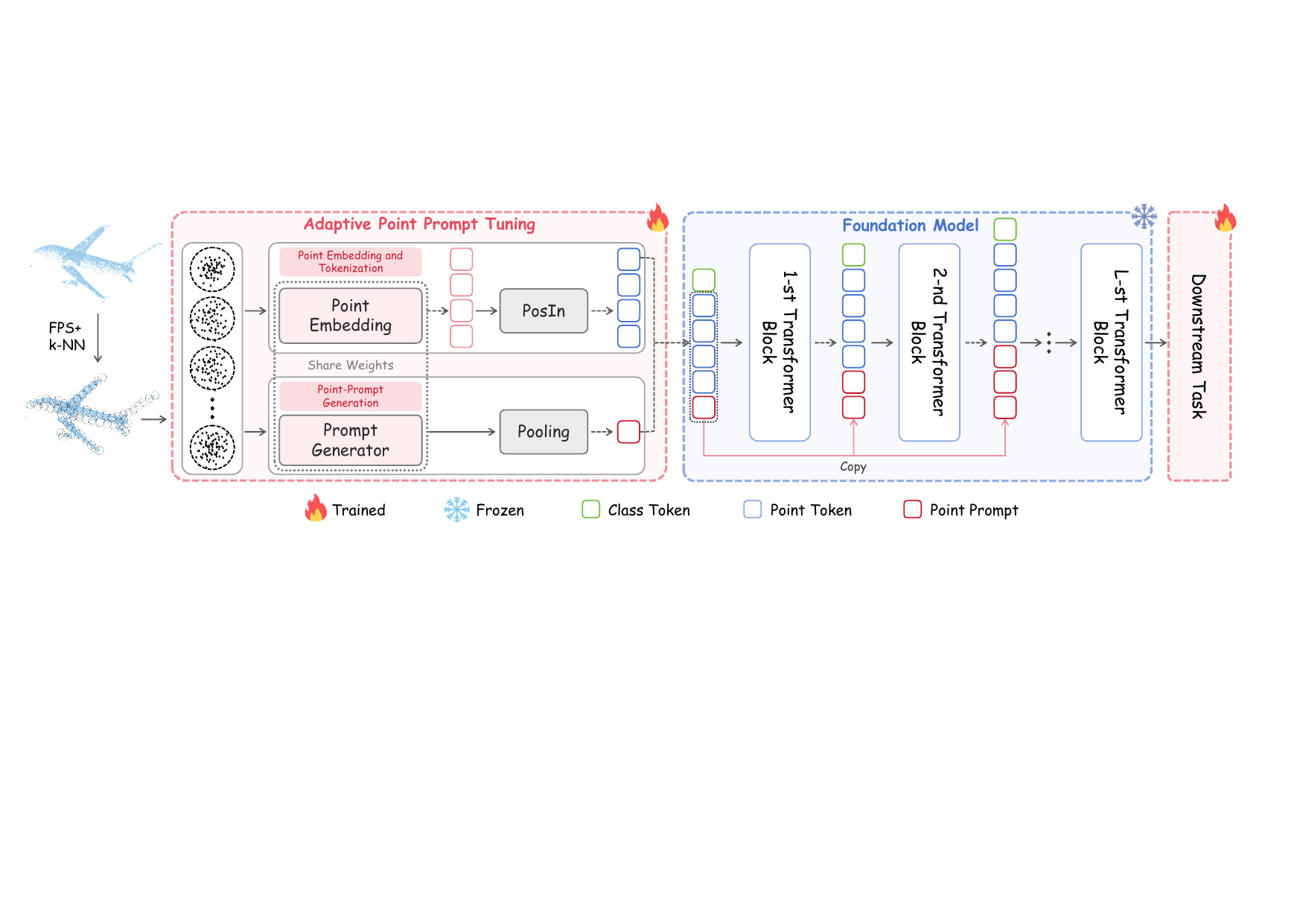}
        \caption{The structure of our proposed adaptive point-prompt tuning.}
        \label{fig:pipeline}
\end{figure*}

We propose adaptive point-prompt tuning (APPT), a method to adapt large-scale Transformer-based models pre-trained on heterogeneous data, for downstream tasks in the 3D point cloud modality.
The raw point cloud grouping input into APPT and the Transformer encoder structure employed in our approach are first overviewed in Sec.~\ref{sec:pre}.
APPT encodes the input point groups into point tokens using the point embedding and position injector modules in Sec.~\ref{sec:pnt_emb}, ensuring that the dimensionality and position information of the point tokens match those of the input tokens in the pre-trained Transformer models.
Subsequently, the prior self-attention mechanism of the foundation model is adapted by injecting a point-prompt, generated by a learnable prompt generator as described in Sec.~\ref{sec:pp-tuning}, while keeping the backbone frozen during the downstream training phase.
The token embedding module and prompt generator share knowledge to ensure consistent feature representation and reduce the number of trainable parameters. 
To enhance the structural knowledge of the point token encoding and allow for more effective information flow, APPT propagates the features encoded by each block to the next, as detailed in Sec.~\ref{sec:fine-tuning}.       
This contrasts with existing fine-tuning strategies, such as VPT-shallow and VPT-deep~\cite{jia2022visual}, where trainable prompts are inserted only into the first or each Transformer block without being passed to subsequent blocks.
The overall pipeline of APPT is illustrated in Fig.~\ref{fig:pipeline}.
Sec.~\ref{sec:rat} explains the rationale behind the proposed APPT, demonstrating its effectiveness in capturing spatial structure and global features from 3D data to provide valuable information for fine-tuning pre-trained models.

\subsection{Preliminaries}
\label{sec:pre}
\noindent\textbf{Raw Point Grouping.}
Given the input point clouds $\mathcal{P} \in \mathbb{R}^{N \times {(d'+C)}}$, where $N$ represents the number of unordered points, denoted as $\mathcal{P}=\left[x^P_1, x^P_2, \cdots, x^P_N\right]$ and $x^P_i \in \mathbb{R}^{d'+C}$ with $d'$-dim coordinates and $C$-dim point feature, we first employ iterative farthest point sampling (FPS) to sample a subset of points $\mathcal{P}_s = \left[x^P_1, x^P_2, \cdots, x^P_{N_s} \right] \in \mathbb{R}^{N_s \times (d'+C)}$.
Subsequently, the $k$-nearest neighbors $\mathcal{P}_g = \left[ \left\{x^P_{1,j} \right\}_{j=1}^k,\left\{x^P_{2,j} \right\}_{j=1}^k, \cdots, \left\{x^P_{N_s,j} \right\}_{j=1}^k \right] \in \mathbb{R}^{N_s \times k \times (d'+C)}$ for each point are identified, wherein each group $\left\{x^P_{i,j} \right\}_{j=1}^k$ within $\mathcal{P}_g$ corresponds to a local region around the centroid point $x^P_{i}$, and $k$ represents the number of points adjacent to the $N_s$ centroid points.
Following this, embedding $\mathcal{P}_g$ becomes necessary to leverage the heterogeneous priors embedded in pre-trained models.

\noindent\textbf{Transformer Encoder.} 
\label{sec:3.1}
The transformer~\cite{vaswani2017attention} encoder comprises an embedding layer and multiple transformer blocks.
For a non-point cloud input $x^H$, which can be a sentence~\cite{devlin2018bert}, an image~\cite{Alexey2021vit} or speech~\cite{chen2021developing}, the model first partitions $x^H$ into $m$ patches, forming a set $\{x^H_i\}_{i=1}^m$.
These patches are then embedded into sequences of $d^H$-dimensional vectors, denoted as $\mathcal{E}^H_0=\texttt{Embed} \left( \left[e^H_1, e^H_2, \cdots, e^H_m \right] \right)$, where $\mathcal{E}^H_0\in \mathbb{R}^{m \times d}$. 
$\mathcal{E}^H_0$ is subsequently fed into $L$ blocks $\{\phi^{(l)}\}_{i=1}^L$ within the transformer model.
We use the superscript $(l)$ to denote the index of the block.
Formally, this procedural description can be mathematically expressed as:
\begin{align}
        e_i^{H,(0)} &= \texttt{Embed}\left(x^H_i \right)+e_i, \label{eq:img_emd1}\\
        \left[e_\text{cls}^{H,(l)}, \mathcal{E}^{H,(l)} \right] &= 
        \phi^{(l)}\left( \left[e_\text{cls}^{H,(l-1)}, \mathcal{E}^{H,(l-1)} \right] \right)   \label{eq:img_emd2}  
\end{align}  
where $e_i^{H,(0)}\in \mathcal{R}^d$ and $e_i \in \mathcal{R}^d$ denote the input path embedding and positional embedding, respectively.
$\mathcal{E}^{H,(l)} = [ e_1^{H, (l)}, e_2^{H, (l)}, \cdots, e_m^{H, (l)}]$.
$e_\text{cls}^{H,(l)}$ is an additional learnable token for classification. 
$\phi^{(l)}$ is composed of multi-head self-attention ($\texttt{MHSA}$), a MLP layer ($\texttt{MLP}$) with layer normalization ($\texttt{LN}$)~\cite{ba2016layer}, and residual connection~\cite{he2016deep}. 
Specifically, $\phi^{(l)}$ is composed by:
\begin{equation} \label{eq:block}
    \begin{cases}
    \tilde{e}_i^{H,(l)} =  \texttt{MHSA}^l\left( e_i^{H,(l-1)} \right) + e_i^{H,(l-1)}\\  
    e_i^{H,(l)} = \texttt{MLP}^l \left( \texttt{LN}\left(\tilde{e}_i^{H,(l)} \right) \right) + \tilde{e}_i^{H,(l)}
    \end{cases}.
\end{equation}
A single self-attention within $\texttt{MHSA}^l$ is calculated by softmax-weighted interactions among the input query, key, and value tokens obtained by three different learnable linear projection weights. 
Finally, the class prediction is achieved by a linear classification head.

\subsection{Point Embedding and Tokenization}
\label{sec:pnt_emb}
Point embedding converts the grouped raw points into a structured and representative embedding, enhancing their utilization and alignment with the input dimensionality of the foundation model, and thereby facilitating the use of its prior knowledge.
We implement a lightweight network ($\texttt{Point\_Embed}$) to obtain the point embedding:
\begin{equation}
\label{eq:pnt_emb}
    \hat{e}^{P}_i = \texttt{Point\_Embed}\left( \mathcal{X}_i^{P} \right),
\end{equation}
where $\texttt{Point\_Embed}$ can take various forms that incorporate local geometry aggregation operations, such as PointNet~\cite{qi2017pointnet}, PointMLP~\cite{maXQ22PointMLP}, PointPN~\cite{Zhang2023ParameterIN}, to name a few.
The input point $x_i^{P}$ is from $\mathcal{P}_g$.
We use $\mathcal{X}_i^{P}$ to represent the set of $k$ neighboring points $\left\{x^P_{i,j} \right\}_{j=1}^k$ around $x^P_{i}$ for simplicity.
To seamlessly integrate with the pre-trained foundation model, the dimensionality of point embedding should align with the 2D or 1D embedding in Eq.~(\ref{eq:img_emd1}).
Specifically, $\hat{e}^{P}_i \in \mathbb{R}^d$.
Eventually, the embedding representation of an input point cloud $\mathcal{P}$ for feeding into pre-trained foundation model is $\hat{\mathcal{E}}^{P} = \left[ \hat{e}^{P}_1, \hat{e}^{P}_2, \cdots, \hat{e}^{P}_{N_s} \right] $.

The inherent unordered nature of point clouds is one of their most significant properties~\cite{qi2017pointnet}, distinguishing 3D data from pixel arrays in visual data and sequences in linguistic data.
Merely aligning the dimensionality of embeddings is insufficient to fully leverage the attention-related priors of a pre-trained transformer. 
Based on the positional encoding in the Transformer~\cite{vaswani2017attention}, we propose the position injector. 
It injects sufficient positional information from the source modality into 3D tokens, enabling more effective collaboration with the frozen transformer.
We use average pooling, $\texttt{avgP}: \mathbb{R}^ {N_s \times d} \rightarrow \mathbb{R}^{1 \times d}$, to obtain a global embedding $e_{g}$ that represents the centroid of the input: 
\begin{equation}\label{eq:PIF}
   e_{g} = \texttt{avgP}\left( \hat{\mathcal{E}}^{P} \right).
\end{equation}
Then, the input point token $e^{P,(0)}_i$ fed into the transformer blocks is obtained by a linear combination of the relative position and the point embedding:
\begin{equation}\label{eq:ini_posIn}
    e^{P,(0)}_i = a \cdot (\hat{e}^{P}_i- e_{g})+ b\cdot \hat{e}^{P}_i,
\end{equation}
where $a$ and $b$ are learnable parameters. 
They can be replaced by a 1D convolution kernel, allowing this module to be seamlessly integrated into an existing model as a standalone layer.
Therefore, Eq.~(\ref{eq:ini_posIn}) can be changed into the following form:
\begin{equation}\label{eq:posIn}
     e^{P,(0)}_i= \texttt{Conv1D}\left( \texttt{Concat}\{ \hat{e}^{P}_i- e_{g}, \hat{e}^{P}_i\} \right),
\end{equation}
where $\texttt{Conv1D}$ denotes 1D convolution operation, and $\texttt{Concat}\{\}$ represents the concatenation of the inputs. 
Since it contains only two training parameters (without using a bias term), the increase in the total number of training parameters is negligible.
The structure of this position injector (PosIn) is shown in Fig.~\ref{fig:PosIn}.

\begin{figure}[t]
        \centering
        \includegraphics[width=0.8\linewidth]{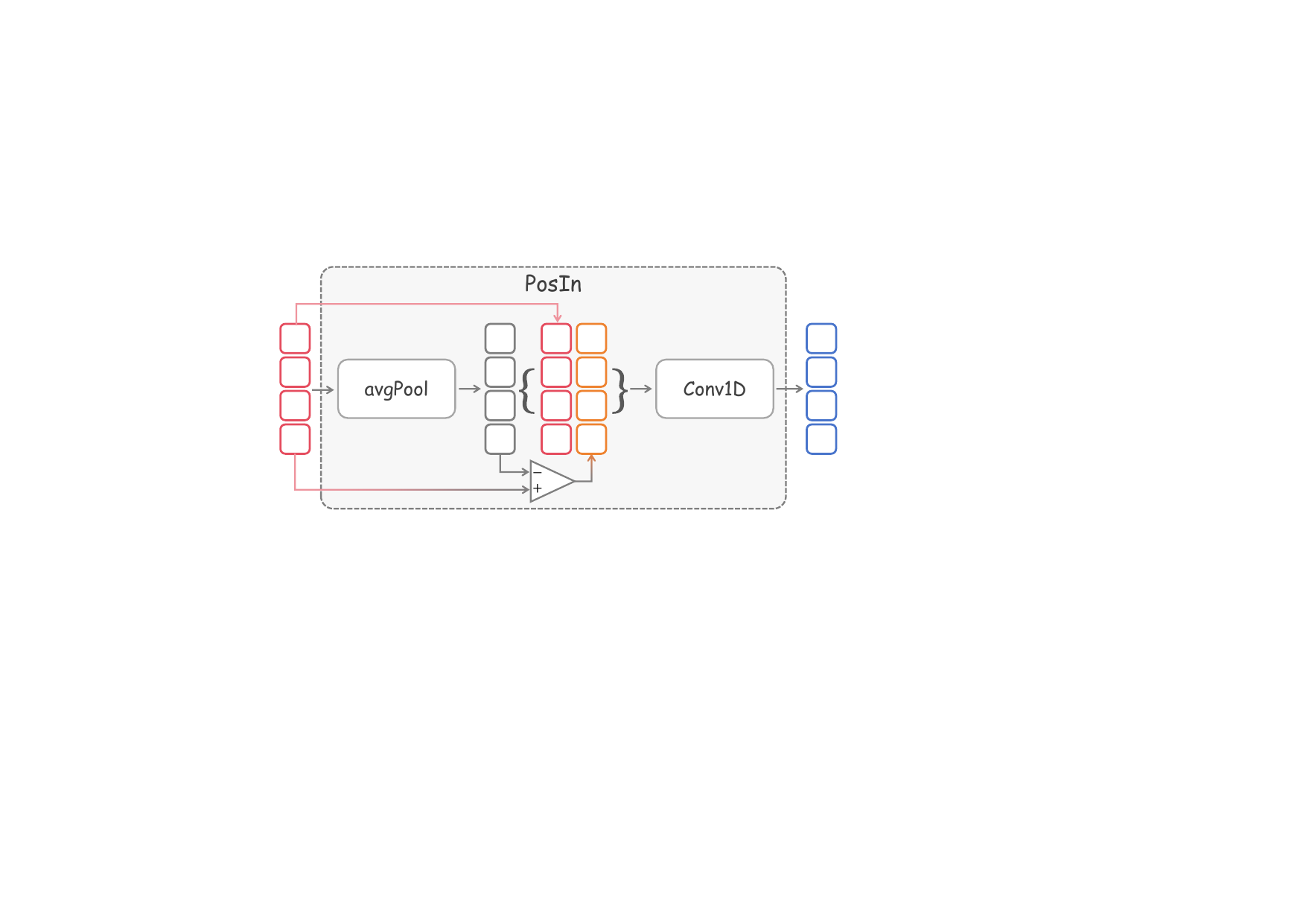}
        \caption{The structure of our proposed position injector (PosIn).
        We encode the location information of point tokens by embedding their relative positional differences.}
        \label{fig:PosIn}
\end{figure}

\subsection{Point-Prompt Generation}
\label{sec:pp-tuning}
Training transformer-based architectures from scratch generally requires larger datasets compared to CNN-based ones~\cite{Yuan21Tokens2Token}.
Compared to text and image data, the availability of 3D data is relatively constrained, leading to challenges such as overfitting and suboptimal utilization of the full potential of transformer-based models. 
This paper investigates PEFT technology to alleviate overfitting and improve model generalization for 3D models. 
PEFT involves the freezing of the pre-trained backbone that is previously trained on an extensive dataset, while introducing a limited number of learnable parameters to adapt to the new dataset.
This new dataset can be data-rich~\cite{jia2022visual,bahng2022exploring}, few-shot~\cite{LeeDJ2023read}, or long-tailed~\cite{DongB2022lpt,shi2024LIFT,li2024GNMPT}, as PEFT equips the model with knowledgeable priors.

Prompt tuning~\cite{Xiang2021PrefixTuning,jia2022visual} is one of the most effective and widely used PEFT methods.
It appends trainable prompts to the tokens in Eq.~(\ref{eq:block}) to fine-tune self-attention for different tasks and has been empirically validated for its effectiveness in handling both 1D linguistic and 2D visual data.
We apply this technique to integrate heterogeneous prior attention with point tokens.
Different from prompt tuning~\cite{Xiang2021PrefixTuning} and VPT~\cite{jia2022visual}, we propose a trainable prompt generator for prompt generation.
Prompts for point clouds (point-prompts) should satisfy the following properties: 1) they are closely related to the input, 2) they capture the overall information of the input point clouds, and 3) they share the same dimensionality as the point embeddings. 
To achieve this, we adopt the same structure as token embedding and introduce a pooling operation to capture the overall features of the input, which are then used as the point-prompts.
To maintain consistency between the point tokens, which capture local features, and the point-prompt, which encodes overall features, while also reducing training parameters, we make the parameters between the point embedding module and the prompt generator module shared.
Consequently, the point-prompt $p_0$ fed into the subsequent transformer blocks is calculated as follows:
\begin{equation}\label{eq:prompt}
p_0 = \texttt{maxP}\left( \hat{\mathcal{E}}^{P} \right) + \texttt{avgP}\left( \hat{\mathcal{E}}^{P} \right),
\end{equation}
where $\texttt{maxP}$ and \texttt{avgP} refer to max pooling and average pooling, respectively. 
$p_0$ is permutation-invariant to the raw point groups, ensuring that the model remains insensitive to the order of point group arrangement (the detailed proof will be provided in Sec.~\ref{sec:rat}).
This prompt generator provides three main advantages:
1) It provides more stable global features; 
2) It eliminates redundant information; and 
3) The generated point-prompt preserves the geometric information of the input point clouds.

\subsection{Effective Fine-Tuning of Transformer Blocks}\label{sec:fine-tuning}
Given a pre-trained foundation model, the generated point-prompt is incorporated into each transformer block. 
During fine-tuning, only the task-specific prompt generator is updated, while the transformer backbone remains fixed.
The point-prompt serves two primary functions: 1) it adapts the prior self-attention mechanisms within the pre-trained transformer model; 2) it encodes the global features of the input point cloud to provide structural information - distinguishing it from existing prompt-tuning techniques.
Consequently, we concatenate the generated point-prompt to each block and retain it at the output of each transformer block, preserving the original encoded point cloud structure while maintaining the interaction between the pre-trained prior and the point-prompt. 
The point-prompted transformer blocks are formulated as:
\begin{align}
    & \quad \mathcal{E}^{P,(0)} =  \texttt{PosIn}\left(\texttt{Point\_Embed}\left(\left[\mathcal{X}^p_i \right] \right) \right) , \label{eq:p_emd1}\\
    & \left[e_\text{cls}^{P,(1)}, \mathcal{Z}^{(1)},  \mathcal{E}^{P,(1)}\right]  = \phi^{(1)}\left( \left[\textcolor{myblue}{e_\text{cls}^{P,(0)}}, \textcolor{myred}{p_0}, \textcolor{myred}{\mathcal{E}^{P,(0)}} \right] \right),\\
    & \left[e_\text{cls}^{P,(l)}, \mathcal{Z}^{(l)},   \mathcal{E}^{P,(l)} \right]  = 
    \phi^{(l)}\left( \left[ \textcolor{myblue}{e_\text{cls}^{P,(l-1)}}, \left[\textcolor{myred}{p_0},\textcolor{myblue}{\mathcal{Z}^{(l-1)}} \right], \textcolor{myblue}{\mathcal{E}^{P,(l-1)}} \right] \right),   \label{eq:p_emd2}  
\end{align}  
where $\texttt{PosIn}$ is calculated by Eqs.~\ref{eq:PIF} and \ref{eq:ini_posIn}.
$\mathcal{Z}^l \in \mathbb{R}^{l\times d}$ denotes the features generated by the $l$-th transformer block.
The colors \textcolor{myred}{red} and \textcolor{myblue}{blue} indicate intermediate variables that originate from trainable and frozen modules, respectively.

For the input token to the downstream head, Li et al.~\cite{li2024GNMPT} proposed that all learnable prompts are trained on the fine-tuning dataset, thereby incorporating newly acquired information. 
They propose the "merge prompt" strategy, which linearly combines all the learned prompts from the final block into a class token.
Inspired by this approach, in our work, both point tokens and prompts are learned from the point cloud dataset. 
We employ a pooling operation, following the Swin transformer~\cite{liu2021swin}, to integrate the newly learned knowledge into the final class token:
\begin{equation}\label{eq:cls_token}
    e_\text{cls} =   \texttt{Pool} \left( \left[e_\text{cls}^{P,(L)}, \mathcal{Z}^{(L)}, \mathcal{E}^{P,(L)} \right] \right),
\end{equation}
where $\texttt{Pool}$ adopts the sum of max and average pooling.
APPT can be beneficial for multiple 3D downstream tasks due to its minimal training cost. 
Only the prompt generator, which shares weights with the point embedding module, and the task-specific head need to be trained.
There are two main downstream tasks:

\noindent\textbf{Classification} involves labeling and categorizing the entire point cloud.
The predicted logit for each class is obtained by applying the softmax function to the output of the final linear layer:
\begin{equation}
    p_i = \dfrac{e^{w_i \cdot e_\text{cls}}}{ \sum_{j=1}^C e^{w_j \cdot e_\text{cls}} },  
\end{equation}
where $w_i$ is the weight of the classification head and $C$ is the total number of classes.
Eventually, the cross-entropy loss can be utilized to calculate the loss function.

\noindent\textbf{Segmentation} involves dividing 3D point cloud data into multiple subsets or regions with similar attributes. 
To achieve this, we utilize a U-Net-style architecture, where the APPT serves as the point encoder. 
The segmentation head concatenates the output features from the transformer blocks within the encoder, followed by deconvolutional interpolation and multiple MLP layers to enable dense prediction. 
Similar to the classification task, the softmax cross-entropy is used as the loss function.

\subsection{Rational Analysis}
\label{sec:rat}
In point cloud analysis, tasks such as classification and segmentation rely on the spatial distribution of points, rather than their order. 
We introduce the \textbf{permutation-invariant}~\cite{NIPS2017DeepSets,NIPS2019DeepSetNet} and show this property of our method.
\begin{definition} \textbf{(Permutation-invariant function.)} For a set $S = \{s_1, s_2, \cdots, s_n\}$, a function $g: \mathbb{R}^ {d_1 \times d} \rightarrow \mathbb{R}^{d_2}$ is permutation-invariant iff it satisfies 
\begin{equation}
    g(\mathbf{S}) = g(\mathbf{\sigma (S)}),
\end{equation}
for any permutation $\sigma$ (any reordering of the elements).
\end{definition}

\begin{lemma} \label{lem:max}
The max operation, \( \max: \mathbb{R}^ {d} \rightarrow \mathbb{R} \), is a permutation-invariant function.
\end{lemma}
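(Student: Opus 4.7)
The plan is to prove Lemma \ref{lem:max} directly from the definitions, since the claim is essentially definitional: the maximum of a finite collection of real numbers depends only on the multiset of values, not on the order in which they are listed.

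First, I would fix a vector $\mathbf{x} = (x_1, x_2, \ldots, x_d) \in \mathbb{R}^d$ and an arbitrary permutation $\sigma$ of the index set $\{1, 2, \ldots, d\}$, so that $\sigma(\mathbf{x}) = (x_{\sigma(1)}, x_{\sigma(2)}, \ldots, x_{\sigma(d)})$. Next, I would recall the defining property of the max operation: $\max(\mathbf{x}) = x_{i^*}$ where $i^* \in \{1, \ldots, d\}$ satisfies $x_{i^*} \geq x_i$ for every $i$. Since $\sigma$ is a bijection on $\{1, \ldots, d\}$, there exists a unique index $j^*$ with $\sigma(j^*) = i^*$, and the entry $x_{\sigma(j^*)} = x_{i^*}$ still dominates every entry $x_{\sigma(j)} = x_{\sigma(j)}$ of the reordered vector, because the multiset $\{x_{\sigma(1)}, \ldots, x_{\sigma(d)}\}$ coincides with $\{x_1, \ldots, x_d\}$.

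From this, I would conclude that $\max(\sigma(\mathbf{x})) = x_{i^*} = \max(\mathbf{x})$, which matches the condition in Definition~1 for permutation-invariance. The argument extends verbatim to the case where ties occur (multiple indices attain the maximum), since the value itself is unchanged.

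I do not expect any real obstacle here: the lemma is a one-line observation whose only subtlety is in being explicit about the bijective nature of $\sigma$ and the fact that permutations preserve multisets. The main purpose of stating the lemma is to serve as a building block for later permutation-invariance claims about $\texttt{maxP}$, $\texttt{avgP}$, and ultimately the point-prompt $p_0$ in Eq.~(\ref{eq:prompt}), so I would keep the proof short and formal.
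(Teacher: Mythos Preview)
Your proposal is correct and follows essentially the same approach as the paper: both arguments observe that a permutation $\sigma$ is a bijection, hence the multiset (or set content) of entries is unchanged, and therefore the maximum value is preserved. Your version is slightly more explicit about tracking the maximizing index through $\sigma$, whereas the paper phrases it more tersely as ``$\sigma$ does not alter the set content,'' but the underlying idea is identical.
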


\begin{proof}
Let $\sigma$ be an arbitrary permutation of the set $S$.
By definition, $\sigma$ is a bijective function that rearranges the elements of $s$, such that $\sigma(S) = \{s_{\sigma(1)}, s_{\sigma(2)}, \dots, s_{\sigma(n)} \}$ for $s_i \in S $.
Since $\max(S)$ selects the largest element in $S$, and the permutation $\sigma$ does not alter the set content, we have $\max(S) = \max(\sigma(S)) $.
\end{proof}

\begin{lemma} \label{lem:mean}
The mean operation, $\texttt{mean}: \mathbb{R}^ {d} \rightarrow \mathbb{R}$, is a permutation-invariant function.
\end{lemma}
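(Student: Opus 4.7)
The plan is to mirror the proof strategy used for Lemma~\ref{lem:max}, since the mean operation decomposes into a sum followed by a division by cardinality, and both of these sub-operations are clearly unaffected by reordering. Concretely, I would begin by letting $\sigma$ denote an arbitrary permutation acting on the set $S = \{s_1, s_2, \ldots, s_n\}$, and write out $\texttt{mean}(S) = \frac{1}{n}\sum_{i=1}^{n} s_i$ and $\texttt{mean}(\sigma(S)) = \frac{1}{n}\sum_{i=1}^{n} s_{\sigma(i)}$, making the target identity explicit.

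The key step is then to invoke the commutativity and associativity of addition in $\mathbb{R}$ to conclude that $\sum_{i=1}^{n} s_i = \sum_{i=1}^{n} s_{\sigma(i)}$, since $\sigma$ is a bijection on $\{1, 2, \ldots, n\}$ and therefore merely reindexes the summation without changing its value. Combining this with the observation that the cardinality $n$ is intrinsic to the set and hence invariant under $\sigma$, division by $n$ preserves equality and yields $\texttt{mean}(S) = \texttt{mean}(\sigma(S))$.

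There is essentially no obstacle here: the lemma follows from the elementary algebraic properties of finite sums. The only minor subtlety worth flagging in the write-up is to make the bijectivity of $\sigma$ explicit, so that the reindexing step is justified formally rather than appealing to intuition. The proof will be short, parallel in structure to that of Lemma~\ref{lem:max}, and will serve as a direct building block for the permutation-invariance claim made about $p_0$ in Eq.~(\ref{eq:prompt}), since $p_0$ is constructed from $\texttt{maxP}$ and $\texttt{avgP}$.
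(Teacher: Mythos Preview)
Your proposal is correct and matches the paper's own proof almost line for line: both fix an arbitrary permutation $\sigma$, expand $\texttt{mean}(\sigma(S)) = \frac{1}{n}\sum_{i} s_{\sigma(i)}$, invoke the bijectivity of $\sigma$ together with commutativity of addition to equate the sums, and conclude. The only cosmetic difference is that you also mention associativity and the invariance of the cardinality $n$, which are harmless elaborations.
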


\begin{proof}
Let $\sigma$ be an arbitrary permutation of the set $S$, where \( S = \{s_1, s_2, \dots, s_n\} \).
The mean of the set $\sigma(S)$ is given by
\begin{equation}
\texttt{mean} \left( \sigma(S) \right) = \frac{1}{n} \sum_{i=1}^{n} s_{\sigma(i)}.
\end{equation}
Since $\sigma$ is a bijective function, $ \sigma(S)$ contains exactly the same elements as $S$.
Furthermore, by the commutative property of addition, we can rearrange the terms in the sum without changing its value,
\begin{equation}
    \frac{1}{n} \sum_{i=1}^{n} s_{\sigma(i)} =  \frac{1}{n} \sum_{i=1}^{n} s_{i}
\end{equation}
Thus, we conclude that:
\begin{equation}
\texttt{mean}(\sigma(S)) = \texttt{mean}(S) .
\end{equation}
\end{proof}
By Lemmas~\ref{lem:max} and \ref{lem:mean}, we can deduce the following theorem regarding pooling operations.
\begin{theorem}\label{thm:pool} 
The max pooling and average pooling across channels, $\texttt{maxP}: \mathbb{R}^ {c \times d} \rightarrow \mathbb{R}^{1 \times d} $ and $\texttt{avgP}: \mathbb{R}^ {c \times d} \rightarrow \mathbb{R}^{1 \times d} $, are both permutation-invariant functions.
\end{theorem}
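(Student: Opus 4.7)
The plan is to reduce the theorem to the two previously established lemmas by processing the output one coordinate at a time. First I would fix notation: represent the input as a matrix $X \in \mathbb{R}^{c \times d}$ whose rows $x_1,\dots,x_c$ (one per channel) form a set $S = \{x_1,\dots,x_c\}$, and let $\sigma$ be an arbitrary permutation, so that $\sigma(X)$ is the matrix whose rows are $x_{\sigma(1)},\dots,x_{\sigma(c)}$. I would then write the pooling operations coordinatewise: for each $j \in \{1,\dots,d\}$,
\begin{equation}
    \texttt{maxP}(X)_j = \max_{i \in \{1,\dots,c\}} X_{ij}, \qquad \texttt{avgP}(X)_j = \frac{1}{c}\sum_{i=1}^{c} X_{ij}.
\end{equation}

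Next, for a fixed coordinate $j$, I would observe that applying $\sigma$ to the rows of $X$ amounts exactly to applying $\sigma$ to the scalar set $\{X_{1j},\dots,X_{cj}\} \subset \mathbb{R}$. Lemma~\ref{lem:max} then gives $\max_i \sigma(X)_{ij} = \max_i X_{ij}$, and Lemma~\ref{lem:mean} gives $\tfrac{1}{c}\sum_i \sigma(X)_{ij} = \tfrac{1}{c}\sum_i X_{ij}$. Since this holds for every coordinate $j$, the two output vectors agree entry by entry, and hence $\texttt{maxP}(\sigma(X)) = \texttt{maxP}(X)$ and $\texttt{avgP}(\sigma(X)) = \texttt{avgP}(X)$, which by Definition~1 establishes permutation-invariance.

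The argument is almost entirely bookkeeping, so I do not anticipate a substantive obstacle; the one subtlety worth stating explicitly is the meaning of ``permutation'' on $\mathbb{R}^{c \times d}$, namely that $\sigma$ reorders the $c$ channel vectors as a whole rather than scrambling scalar entries independently across coordinates. Once this is clarified, the reduction to the scalar lemmas is immediate and the theorem follows.
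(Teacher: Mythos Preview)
Your proposal is correct and matches the paper's approach: the paper itself does not spell out a proof but simply states that the theorem follows from Lemmas~\ref{lem:max} and~\ref{lem:mean}, and your coordinatewise reduction to those scalar lemmas is exactly the intended deduction. The only addition you make is the explicit bookkeeping and the clarification that the permutation acts on the $c$ channel rows as a whole, which is a helpful elaboration rather than a departure.
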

The global embedding $e_{g}$ (as defined in Eq.~\ref{eq:PIF} of Sec.~\ref{sec:pnt_emb}) is utilized to determine the relative position and, therefore, must remain invariant to the ordering of point embeddings.  
Similarly, the point-prompt $p_o$ (as defined in Eq.~\ref{eq:prompt} of Sec.~\ref{sec:pp-tuning}) offers a comprehensive representation of the input, while the final class token $e_\text{cls}$ (as defined in Eq.~\ref{eq:cls_token} of Sec.~\ref{sec:fine-tuning}) is the global feature that integrates both the input data and the prior knowledge from the foundation model.
Since the order of point embeddings does not reflect the spatial relationship or structure of the input point cloud, both $p_o$ and $e_\text{cls}$ should also be unaffected by the permutation of point embeddings.
Theorem~\ref{thm:pool} shows that $e_{PI}$, $p_o$ and $e_\text{cls}$ are permutation-invariant with respect to the order of point embeddings.
This property enables our proposed APPT to effectively extract spatial structure and global features from point cloud data, allowing the model to better cope with noise and sampling unevenness.

\section{Experiment}
\subsection{Datasets and Basic Settings}

\begin{table*}[t]
 \caption{Comparisons on accuracy for object classification on ScanObjectNN and ModelNet40.
The best and second-best results are highlighted in \textbf{\underline{underlined bold}} and \textbf{bold}, respectively.
The superscript * denotes results obtained using ViT-B for P2P to ensure a fair comparison.
``Aud.'' is an abbreviation for ``Audio''.
}
 \label{tab:com_cls_SONN_MN40}
 \centering  
 \renewcommand{\arraystretch}{1.1}
 \resizebox{0.8\linewidth}{!}  
{\begin{tabular} {m{2.9cm} | m{1.4cm}<{\centering} m{1.4cm}<{\centering} | m{1.4cm}<{\centering} m{1.4cm}<{\centering} m{1.4cm}<{\centering} | m{1.4cm}<{\centering}} 
  \hlinew{1pt}
  \multirow{2}{*}{Methods}  & Published & Pretrained & \multicolumn{3}{c|}{ScanObjectNN}  & \multirow{2}{*}{ModelNet40}  \\
  \cline{4-6}
                            & Year      & Modality   & OBJ-BG & OBJ-ONLY & PB-T50-RS     &  \\ 
  \hline  
  \multicolumn{7}{c}{MLP/CNN-based Model} \\
  \hline
  PointNet~\cite{qi2017pointnet} & 2017 & N/A & 73.8 & 79.2 & 68.0 & 89.2\\  %
  DGCNN~\cite{Wang2019Dynamic} & 2019 & N/A & 82.8 & 86.2 & 78.1 & 92.9\\      %
  PointMLP~\cite{maXQ22PointMLP}& 2022  & N/A & - & - & 85.2 &94.1 \\   
  Point-PN~\cite{Zhang2023ParameterIN} & 2023 & N/A & 91.0 & 90.2 & 87.1 & 93.8\\
  \hdashline
  PointNet-OcCo~\cite{wangHC2021occo}  & 2021 & 3D & - & - & 80.0 & 90.1\\    %
  DGCNN-OcCo~\cite{wangHC2021occo}  & 2021 & 3D & - & - & 83.9 & 93.0\\     %
  \hline  
  \multicolumn{7}{c}{MHSA-based Model} \\
  \hline  
  Transformer~\cite{vaswani2017attention} &2017  & N/A & 79.9 & 80.6 & 77.2 &91.4 \\  %
  PCT~\cite{guoMH2021pct} & 2021 & N/A &  - &- & -& 93.2 \\
  \hdashline
  Transformer-OcCo~\cite{wangHC2021occo} &2021 & 3D & 84.9 & 85.5 & 78.8 & 92.1\\  %
  Point-BERT~\cite{yu2022point} &2022 & 3D & 87.4 & 88.1 & 83.1  & 93.2 \\    %
  Point-MAE~\cite{pangYT22PointMAE} &2022 & 3D & 90.0 & 88.3 & 85.2 &93.8 \\     %
  Joint-MAE~\cite{GuoZQLH23JointMAE} &2023& 3D & 90.9 & 88.9 & 86.1 & 94.0\\      %
  \makecell[l]{Point-BERT \\ \: w. Point-PEFT~\cite{tang2024PointPEFT} } 
                               &2024 & 3D & - & - & 85.0  & 93.4 \\ 
  \makecell[l]{Point-BERT \\ \: w. DAPT~\cite{Xin2024DAPT} } 
                               &2024 & 3D & 91.1 & 89.7 & 85.4  & 93.6 \\                                
  \hdashline
  P2P$^*$~\cite{Ziyi21P2P}  &2022 & 2D & - & - & 84.1  &92.4  \\          %
  APF~\cite{Li2024APF} &2024 & 2D  & 89.9 & 89.0 & 87.8 & 94.2 \\ %
  Any2Point~\cite{tang2025any2point} &2024  & 2D & - & - & 87.7 & 93.2\\ %
 \rowcolor{mygray}
  APPT & Ours & 2D & \underline{\textbf{92.4}} & 90.5 & \underline{\textbf{92.6}} & 94.2\\
  \hdashline
  ACT~\cite{dong2023act} & 2023 & 3D+2D & 87.1 & 89.0 & 81.5   & 93.7\\
  ReCon~\cite{qi2023contrast} & 2023 & \makecell[c]{3D+2D+ \\ 1D (Text) }  & 90.6 & \underline{\textbf{90.7}} & 83.8   & 93.4\\
  Any2Point~\cite{tang2025any2point} &2024 & 1D (Aud.) & - & - & 87.0 & 92.7 \\ %
  Any2Point~\cite{tang2025any2point}  &2024 &  1D (Text) & - & - & \textbf{91.9} & 94.3 \\ %
  \rowcolor{mygray}
  APPT  & Ours & 1D (Aud.) & \textbf{92.3} & \underline{\textbf{90.7}} & 88.9 & \textbf{94.6} \\
  \rowcolor{mygray}
  APPT  & Ours & 1D (Text) & 91.9 & 90.2 & 91.4 & \underline{\textbf{95.1}}\\   
  \hlinew{1pt}
  \end{tabular}}
\end{table*}


\begin{table*}[tb]
\caption{Few-shot classification results on ModelNet40.} 
\label{tab:com_few_shot} 
 \centering  
 \renewcommand{\arraystretch}{1.1}
 \resizebox{0.85\linewidth}{!}  
{\begin{tabular}
{m{3cm} |  m{1.5cm}<{\centering} | m{2cm}<{\centering} m{2cm}<{\centering} | m{2cm} <{\centering}m{2cm}<{\centering}}
  \hlinew{1pt}
  \multirow{2}{*}{Methods} &\multirow{2}{*}{\makecell[c]{Pre-trained \\ Modality }}& \multicolumn{2}{c|}{5-way} & \multicolumn{2}{c}{10-way} \\ 
  \cline{3-6}  &   & 10-shot & 20-shot & 10-shot & 20-shot  \\ 
  \hline
  \multicolumn{6}{c}{MLP/CNN-based Model} \\
  \hline
  PointNet~\cite{qi2017pointnet} & N/A &52.0\ $\pm$\ 3.8 &57.8\ $\pm$\ 4.9 & 46.6\ $\pm$\ 4.3 & 35.2\ $\pm$\ 4.8 \\
  PointNet-OcCo~\cite{wangHC2021occo} & 3D &89.7\ $\pm$\ 1.9 &92.4\ $\pm$\ 1.6 & 83.9\ $\pm$\ 1.8 & 89.7\ $\pm$\ \textbf{1.5} \\  
  \makecell[l]{PointNet \\ \: w. CrossPoint~\cite{AfhamM22CrossPoint} } 
                                      & 2D &90.9\ $\pm$\ 4.8 &93.5\ $\pm$\ 4.4 &84.6\ $\pm$\ 4.7 &90.2\ $\pm$\ 2.2 \\
  DGCNN~\cite{Wang2019Dynamic}  & N/A  &31.6\ $\pm$\ 2.8 &40.8\ $\pm$\ 4.6 &19.9\ $\pm$\ 2.1 &16.9\ $\pm$\ 1.5 \\ 
  DGCNN-OcCo~\cite{wangHC2021occo} & 3D  &90.6\ $\pm$\ 2.8 &92.5\ $\pm$\ 1.9 &82.9\ $\pm$\ 1.3 &86.5\ $\pm$\ 2.2 \\ 
  \makecell[l]{DGCNN \\ \: w. CrossPoint~\cite{AfhamM22CrossPoint} } 
                                & 2D  &92.5\ $\pm$\ 3.0 &94.9\ $\pm$\ 2.1 &83.6\ $\pm$\ 5.3 &87.9\ $\pm$\ 4.2 \\
  \hline 
  \multicolumn{6}{c}{MHSA-based Model} \\
  \hline
  Transformer~\cite{vaswani2017attention} & N/A &87.8\ $\pm$\ 5.2 &93.3\ $\pm$\ 4.3 &84.6\ $\pm$\ 5.5 &89.4\ $\pm$\ 6.3\\
  Transformer-OcCo~\cite{wangHC2021occo}  & 3D &94.0\ $\pm$\ 3.6 &95.9\ $\pm$\ 2.3 &89.4\ $\pm$\ 5.1 &92.4\ $\pm$\ 4.6\\
  Point-BERT~\cite{yu2022point}   & 3D & 94.6\ $\pm$\ 3.1 &96.3\ $\pm$\ 2.7 &91.0\ $\pm$\ 5.4 &92.7\ $\pm$\ 5.1\\
  Point-MAE~\cite{pangYT22PointMAE}  & 3D &96.3\ $\pm$\ 2.5 & 97.8\ $\pm$\ 1.8 & 92.6\ $\pm$\ 4.1 & 95.0\ $\pm$\ 3.0\\
  Joint-MAE~\cite{GuoZQLH23JointMAE}  & 3D &96.7\ $\pm$\ 2.2 & 97.9\ $\pm$\ 1.8 & 92.6\ $\pm$\ 3.7 & 95.1\ $\pm$\ 2.6\\
  \makecell[l]{Point-BERT \\ \: w. DAPT~\cite{Xin2024DAPT} }
                                    & 3D &95.8 $\pm$\ 2.1 & 97.3\ $\pm$\ 1.3 & 92.2\ $\pm$\ 4.3 & 94.2\ $\pm$\ 3.4\\
  APF~\cite{li2023ashapeformer}   & 2D & \textbf{96.9}\ $\pm$\ 1.8 & 98.1 $\pm$\ 1.8 & \textbf{92.6}\ $\pm$\ 2.4 & \underline{\textbf{95.7}} \ $\pm$\ 1.6 \\    
  \hdashline
  \rowcolor{mygray}
  APPT (ours)  & 2D & \underline{\textbf{97.0}}\ $\pm$\ \underline{\textbf{1.0}} & \underline{\textbf{99.1}} $\pm$\ \underline{\textbf{0.9}} & \underline{\textbf{92.7}}\ $\pm$\ \underline{\textbf{0.8}} & \textbf{95.3} \ $\pm$\ 2.3 \\  
  \rowcolor{mygray}
  APPT (ours)  & 1D (Text) & 96.5\ $\pm$\ 2.0 & 99.0 $\pm$\ 1.0 & 91.5\ $\pm$\ 2.5 & 95.1 \ $\pm$\ 2.1 \\
  \rowcolor{mygray}
  APPT (ours)  & 1D (Aud.) & 96.5\ $\pm$\ 1.5 & \underline{\textbf{99.1}} $\pm$\ \underline{\textbf{0.9}} & 91.4\ $\pm$\ 1.6 & 94.9 \ $\pm$\ 1.9 \\
  \hlinew{1pt} 
  \end{tabular}}
\end{table*}

\noindent\textbf{Datasets.}
We conduct object classification tasks using the widely used benchmarks, ScanObjectNN~\cite{uy2019revisiting} and ModelNet40~\cite{wuZR20153d}.
ScanObjectNN is a challenging dataset with inherent scan noise and occlusion, consisting of 15,000 scanned objects across 15 distinct classes, sampled from the real world. 
In line with prior work, we conduct experiments on three variants: OBJ-BG, OBJ-ONLY, and PB-T50-RS. 
ModelNet40 contains 12,311 CAD models across 40 object categories. 
We follow the official data split, with 9,843 objects for training and 2,468 for evaluation, ensuring a fair comparison.
For part segmentation, we utilize ShapeNetPart~\cite{yi2016scalable}, a meticulously annotated 3D dataset derived from ShapeNet. 
ShapeNetPart encompasses 16 distinct shape categories, each annotated at the part level across 50 classes. Notably, each category is further delineated into 2 to 6 unique parts, providing granularity and specificity essential for detailed segmentation analysis.

\begin{table*}[t]
\caption{Part segmentation results on ShapeNetPart. $\text{mIoU}_C$ (\%) is the mean of class IoU. $\text{mIoU}_I$ (\%) is the mean of instance IoU. ``Trans.'' abbreviates for Transformer.}  \label{tab:com_seg}
\centering  
\renewcommand{\arraystretch}{1.2}
\resizebox{\linewidth}{!}  
{\begin{tabular}
  {m{2.1cm} | m{0.6cm}<{\centering}   m{0.6cm}<{\centering} m{0.5cm}<{\centering} m{0.5cm}<{\centering} m{0.5cm}<{\centering} m{0.5cm}<{\centering} m{0.5cm}<{\centering} m{0.5cm}<{\centering} m{0.5cm}<{\centering} m{0.5cm}<{\centering} m{0.5cm}<{\centering} m{0.5cm}<{\centering} m{0.5cm}<{\centering} m{0.5cm}<{\centering} m{0.5cm}<{\centering} m{0.5cm}<{\centering} m{0.5cm}<{\centering} m{0.5cm}<{\centering} }
  \hlinew{1pt}
  Methods & $\text{mIoU}_C$ & $\text{mIoU}_I$ & aero-plane &  bag & cap & car & chair & ear-phone  & guitar & knife & lamp & laptop & motor-bike & mug & pistol & rocket & skate-board & table \\ 
  \hline  
  \multicolumn{19}{c}{MLP/CNN-based Model} \\
  \hline
  PointNet~\cite{qi2017pointnet} & 80.4 & 83.7 & 83.4 & 78.7 & 82.5 & 74.9 & 89.6 & 73.0 & 91.5 & 85.9 & 80.8 & 95.3 & 65.2 & 93.0 & 81.2 & 57.9 & 72.8 & 80.6 \\
  PointNet++~\cite{QiNIPS2017pointnet2} & 81.9 & 85.1 & 82.4 & 79.0 & 87.7 & 77.3 & 90.8 & 71.8 & 91.0 & 85.9 & 83.7 & 95.3 & 71.6 & 94.1 & 81.3 & 58.7 & 76.4 & 82.6 \\ 
  DGCNN~\cite{Wang2019Dynamic} & 82.3 & 85.2 & 84.0 & 83.4 & 86.7 & 77.8 & 90.6 & 74.7 & 91.2 & 87.5 & 82.8 & 95.7 & 66.3 & 94.9 & 81.1 & 63.5 & 74.5 & 82.6 \\
  KPConv~\cite{ThomasH2019KPConv} & 85.1 & \textbf{86.4} & 84.6 & 86.3 & 87.2 & 81.1 & 91.1 & 77.8 & 92.6 & 88.4 & 82.7 & 96.2 & 78.1 & 95.8 & 85.4 & 69.0 & 82.0 & 83.6\\
  PAConv~\cite{xuM2021paconv}  & 84.6 & 86.1 & - & - & - & - & - & - & - & - & - & - & - & - & - & - & - & - \\
  PointMLP~\cite{maXQ22PointMLP} & 84.6 & 86.1 & 83.5 & 83.4 & 87.5 & 80.54 & 90.3 & 78.2 & 92.2 & 88.1 & 82.6 & 96.2 & 77.5 & 95.8 & 85.4 & 64.6 & 83.3 & 84.3 \\
  \hline
  \multicolumn{19}{c}{MHSA-based Model} \\
  \hline
  Trans.~\cite{vaswani2017attention} & 83.4 & 85.1 & 82.9 & 85.4 & 87.7 & 78.8 & 90.5 & 80.8 & 91.1 & 87.7 & 85.3 & 95.6 & 73.9 & 94.9 & 83.5 & 61.2&  74.9 & 80.6 \\
  Point Trans.~\cite{zhao2021point} & 83.7 & \underline{\textbf{86.6}} & - & - & - & - & - & - & - & - & - & - & - & - & - & - & - \\
  PCT~\cite{guoMH2021pct} & - &\textbf{86.4} & 85.0 & 82.4 & 89.0 & 81.2 & 91.9 & 71.5 & 91.3 & 88.1 & 86.3 & 95.8 & 64.6 & 95.8 & 83.6 & 62.2 & 77.6 & 83.7 \\
  Trans.-OcCo~\cite{wangHC2021occo} & 83.4 & 85.1 & 83.3 & 85.2 & 88.3 & 79.9 & 90.7 & 74.1 & 91.9 & 87.6 & 84.7 & 95.4 & 75.5 & 94.4 & 84.1 & 63.1 & 75.7 & 80.8  \\
  Point-BERT~\cite{yu2022point} & \textbf{84.1} & 85.6 & 84.3 & 84.8 & 88.0 & 79.8 & 91.0 & 81.7 & 91.6 & 87.9 & 85.2 & 95.6 & 75.6 & 94.7 & 84.3 & 63.4 & 76.3 & 81.5 \\
  Point-MAE~\cite{pangYT22PointMAE} & - & 86.1 & 84.3& 85.0 & 88.3 & 80.5 & 91.3 & 78.5 & 92.1 &  87.4 & 96.1 & 96.1 & 75.2 & 94.6 & 84.7 & 63.5 & 77.1 & 82.4 \\
  P2P$^*$~\cite{Ziyi21P2P} & 82.5 & 85.7 & 83.2 & 84.1 & 85.9 & 78.0 & 91.0 & 80.2 & 91.7 & 87.2 & 85.4 & 95.4 & 69.6 & 93.5 & 79.4 & 57.0 & 73.0 & 83.6 \\
  Joint-MAE~\cite{GuoZQLH23JointMAE} & \underline{\textbf{85.4}} & 86.3 & - & - & - & - & - & - & - & - & - & - & - & - & - & - & - & - \\
  \makecell[l]{Point-BERT~\cite{yu2022point} \\ \: w. DAPT~\cite{Xin2024DAPT} } 
                         & 83.8 & 85.5 & - & - & - & - & - & - & - & - & - & - & - & - & - & - & - \\
  APF~\cite{Li2024APF}  & 83.4 & 86.1 & 83.6 & 84.8 & 85.4 & 79.8 & 91.3 & 77.0 & 91.4 & 88.4 & 84.4 & 95.5 & 76.3 & 95.3 & 82.5 & 59.5 & 76.1 & 83.5 \\
  \hdashline
  \rowcolor{mygray}
  APPT (ours) & 84.0 & 85.9 & 83.5 & 85.0 & 86.7 & 79.8 & 91.9 & 79.6 & 91.9 & 87.9 & 83.7 & 96.1 & 76.2 & 95.8 & 82.2 & 65.1 & 76.4 & 82.8 \\
  \hlinew{1pt}
\end{tabular}}
\end{table*}

\noindent\textbf{Implementation Details.}
We follow the settings in~\cite{Ziyi21P2P} and~\cite{GuoZQLH23JointMAE}, using the AdamW optimizer in combination with the Cosine annealing scheduler.
The learning rate is initialized at $5\times 10^{-4}$, with a weight decay of $5 \times 10^{-2}$. 
For the point embedding module, we explore an architecture based on Point-PN~\cite{Zhang2023ParameterIN}.
The output dimensionality of the point embedding module is set to 768 to match the input feature channels of the Transformer architecture.
In comparison experiments, the ViT-Base version (ViT-B)~\cite{Dosovitskiy21vit} pre-trained on imageNet21K~\cite{russakovsky2015imagenet} is utilized as the pre-trained 2D model, which is widely adopted in previous work~\cite{Ziyi21P2P, li2023ashapeformer}. 
For the 1D model, we leverage ImageBind audio encoder~\cite{girdhar2023imagebind} for the audio prior and CLIP text encoder~\cite{radford2021clip} for the language prior, respectively.
For few-shot classification and part segmentation, we conduct experiments using the 2D pre-trained ViT-B.
In the ablation study, we further investigate the impact of various pre-trained models to rigorously validate the effectiveness of our proposed APPT framework. 
Specifically, we employ DINOv2~\cite{Oquab2023DINOv2LR} and DeiT~\cite{Touvron2020TrainingDI} as alternative visual priors, and RoBERTa~\cite{Liu2019RoBERTaAR} as the linguistic prior, to assess the robustness and generalizability of our proposed APPT across different pre-trained architectures.

\noindent\textbf{Comparison Methods.}
We compare our APPT with two primary categories of methods. 
The first category consists methods based on multilayer perceptron (MLP) or convolutional neural network (CNN), including foundational works such as PointNet~\cite{qi2017pointnet}, DGCNN~\cite{Wang2019Dynamic}, as well as more recent advancements like PointMLP~\cite{maXQ22PointMLP} and Point-PN~\cite{Zhang2023ParameterIN}. 
Additionally, we evaluate against 3D pre-trained models, including OcCo~\cite{wangHC2021occo}, which integrates PointNet and DGCNN.
The second category comprises methods leveraging multi-head self-attention (MHSA), including the basic Transformer~\cite{vaswani2017attention} and its adaptations for point cloud data, such as Point Cloud Transformer (PCT)~\cite{guoMH2021pct}.
Additionally, we compare our approach with methods that utilize 3D pre-trained models, such as Transformer-OcCo\cite{wangHC2021occo}, Point-BERT~\cite{yu2022point}, Point-MAE~\cite{pangYT22PointMAE}, Joint-MAE~\cite{GuoZQLH23JointMAE}, and fine-tuned Point-BERT with Point-PEFT~\cite{tang2024PointPEFT}.
To ensure a comprehensive evaluation, we also include methods employing 2D pre-trained models, such as P2P~\cite{Ziyi21P2P}, our conference version APF~\cite{Li2024APF}, and Any2Point~\cite{tang2025any2point}. 
Furthermore, we extend our comparisons to pre-trained models from other modalities, including ACT~\cite{dong2023act}, ReCon~\cite{qi2023contrast}, and Any2Point~\cite{tang2025any2point}, which integrate audio and text data.

\subsection{Comparison Results}

\noindent\textbf{Object Classification.}
Table~\ref{tab:com_cls_SONN_MN40} presents a comparative analysis of the APPT classification performance compared to the existing methods in the ScanObjectNN and ModelNet40 datasets.
From the experimental results, the following observations can be drawn:
1) The integration of pre-trained models, irrespective of modality, consistently enhances model performance, albeit with varying degrees of improvement across methods. 
For example, incorporating the 3D pre-trained model Transformer-OcCo improves performance by 1.6\% on ScanObjectNN and 0.7\% on ModelNet40, demonstrating the effectiveness of leveraging 3D priors.
In contrast, Joint-MAE achieves more substantial improvements of 8.9\% and 2.6\% on the respective datasets. 
These results underscore the necessity for developing more effective strategies to better harness prior knowledge and maximize performance gains.
2) APPT consistently outperforms existing SOTA methods by a large margin, particularly on the challenging real-world dataset, ScanObjectNN.
For example, on the most challenging split, PB-T50-RS, the recent method Any2Point, which also employs Point-PN for point cloud tokenization, achieves accuracies of 87.7\% with the visual pre-trained model and 91.9\% with the textual pre-trained model, improving 0.7\%  and 4.8\%, respectively, over Point-PN. 
In comparison, APPT achieves accuracies of 92.6\% with the visual pre-trained model and 91.4\% with the textual pre-trained model, delivering remarkable gains of 5.5\% and 4.3\%, respectively, over Point-PN.
On ModelNet40, APPT outperforms Any2Point across all corresponding pre-trained modalities and surpasses other SOTA competitors. 
For instance, APPT with the textual pre-trained model achieves an accuracy of 95.1\%, exceeding Any2Point by 0.8\% and ReCon, which integrates 3D+2D+1D pre-trained modalities, by 1.7\%.
Overall, our method demonstrates superior performance.


\noindent\textbf{Few-shot Classification.}
To demonstrate the generalization capability of the proposed APPT, we conduct experiments under few-shot settings, following the common protocol established in~\cite{yu2022point,GuoZQLH23JointMAE}.
The `$N$-way, $K$-shot' configuration is a conventional setup, wherein $N$ classes are randomly selected, with each class containing $K$ training samples and 20 testing samples. 
Each experimental setting was repeated 10 times, and the results are reported as the mean performance accompanied by the standard deviation. 
The results are summarized in Table~\ref{tab:com_few_shot}.
Compared to both 2D and 3D pre-trained models, APPT exhibits superior generalization ability in few-shot learning.
For instance, APPT achieves notable improvements of 3.0\%, 3.2\%, 3.3\%, and 2.9\% over the 3D pre-trained model Transformer-OcCo in four distinct settings.
Furthermore, even compared to recently proposed SOTA methods such as Point-MAE, Joint-MAE, and our conference version APF, APPT consistently outperforms these approaches in terms of both accuracy and stability.
The only exception occurs in the 10-way 20-shot setting, where APPT marginally underperforms compared to APT. 
These results underscore the robustness and efficacy of the proposed APPT framework in few-shot learning tasks.

\noindent\textbf{Part Segmentation.}
In alignment with established methodologies~\cite{qi2017pointnet,pangYT22PointMAE,GuoZQLH23JointMAE}, we sample 2,048 points from each input instance and adopt the same segmentation head as utilized in Point-MAE~\cite{pangYT22PointMAE} and Joint-MAE~\cite{GuoZQLH23JointMAE}.
The corresponding results are detailed in Table~\ref{tab:com_seg}. 
Although APPT may not outperform SOTA methods across all evaluation metrics, it exhibits competitive overall performance.
Notably, APPT outperforms both P2P and our conference version APF, both of which leverage image priors, underscoring its enhanced capability in integrating multimodal information.
Furthermore, although APPT marginally lags behind Joint-MAE in terms of $\text{mIoU}_C$ and $\text{mIoU}_I$, it is crucial to emphasize that Joint-MAE necessitates training from scratch, a process that demands substantially greater computational resources and training time.
In contrast, APPT requires significantly lower computational overhead, making it a more efficient and practical alternative for segmentation tasks.

\subsection{Further Analysis}

\begin{table}[!t]
\caption{Impact of each component. 
The abbreviations are defined as follows: PE: pint embedding, 2D Mod.: 2D modality, PPT: point prompt tuning, SONN: ScanObjectNN, and MN40: ModelNet40.
}  \label{tab:abla}
 \centering  
 \setlength{\tabcolsep}{8pt}
 \renewcommand{\arraystretch}{1.1}
 \resizebox{1.\linewidth}{!}
 {
  \begin{tabular}{cccc|cc}
  \hlinew{1pt} 
  \multirow{2}{*}{PE} & \multirow{2}{*}{2D Mod.} & \multirow{2}{*}{PosIn} & \multirow{2}{*}{PPT} &SONN  & MN40 \\
  \cline{5-6}
    &  &   &     & Acc. (\%)  & Acc. (\%)  \\
  \hline
  \ding{51}  & \ding{55} &\ding{55}  & \ding{55} & 87.1 (base)     & 93.8 (base) \\  
  \ding{51}  & \ding{51} &\ding{55}  & \ding{55} & 90.1 \color{deepred}($\uparrow$ 3.0) &  93.9 \color{deepred}($\uparrow$ 0.1)\\ 
  \ding{51}  & \ding{51} & \ding{51}  & \ding{55} & 91.2 \color{deepred}($\uparrow$ 4.1)          & 94.1 \color{deepred}($\uparrow$ 0.3)\\
  \ding{51}  & \ding{51} & \ding{55}  & \ding{51} & 91.4 \color{deepred}($\uparrow$ 4.3)          & 94.1 \color{deepred}($\uparrow$ 0.3)\\  
  \ding{51}  & \ding{51} & \ding{51}  & \ding{51} & \textbf{92.6} \color{deepred}($\uparrow$ 5.5) & \textbf{94.2} \color{deepred}($\uparrow$ 0.4) \\ 
  \hlinew{1pt}
 \end{tabular}
 }
\end{table}

\begin{figure}[!t]
\subfloat[PE]{ 
    \includegraphics[width=0.5\linewidth, height=0.4\linewidth]{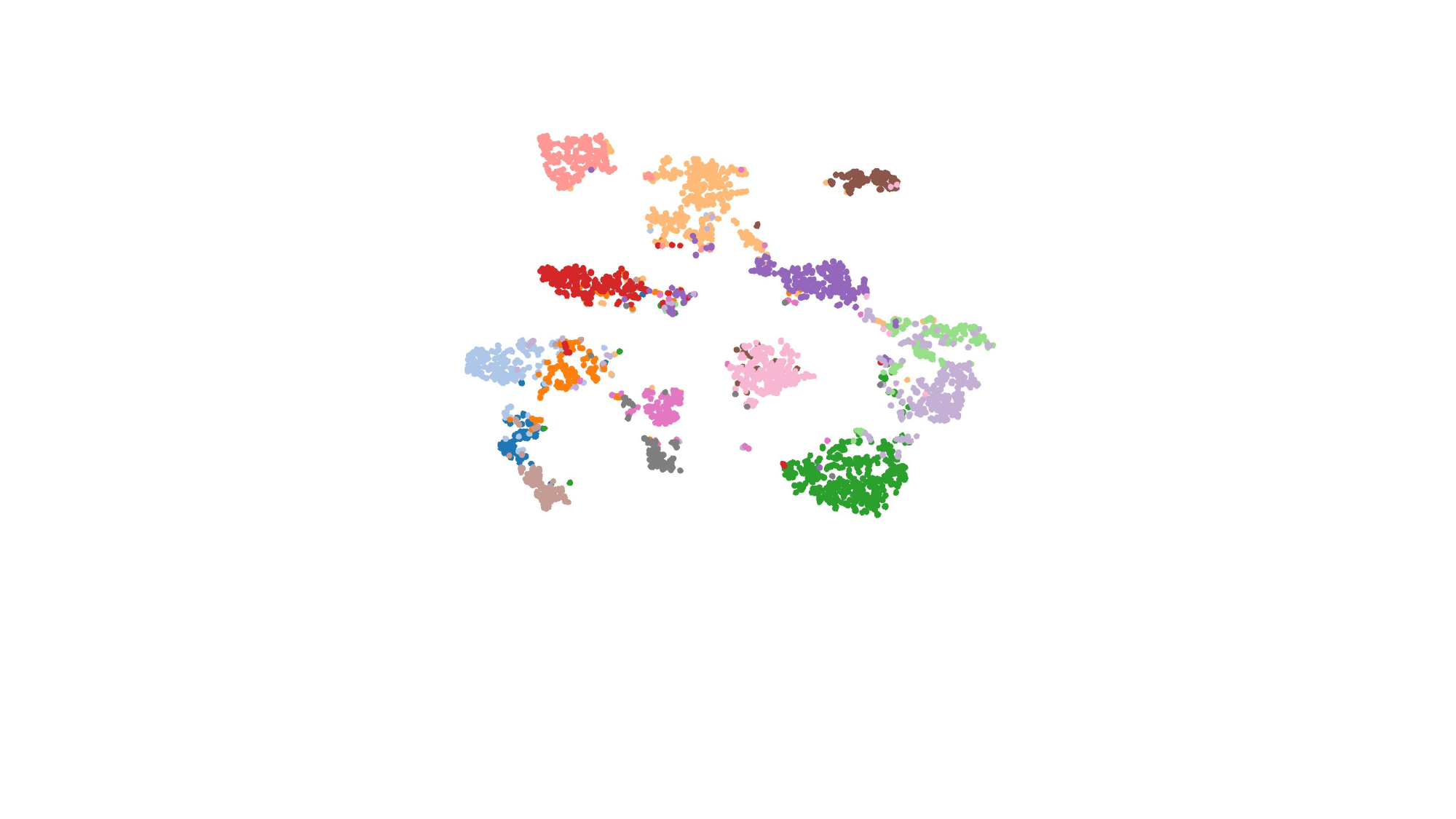}
    \label{fig:PN}
    }
\subfloat[APPT w/o PPT]{   
    \includegraphics[width=0.5\linewidth, height=0.4\linewidth]{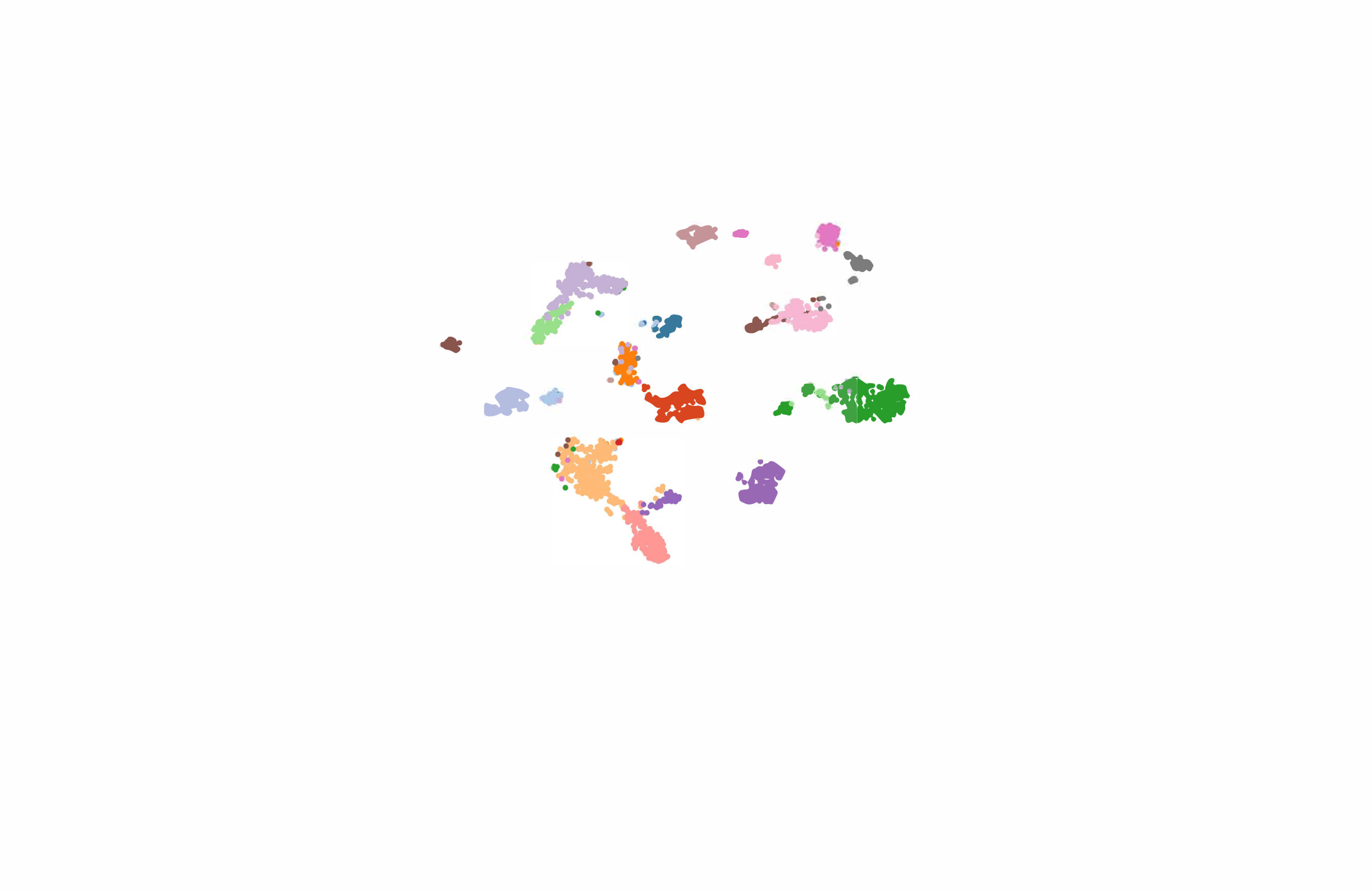}
    \label{fig:PI}
    }  \\
\subfloat[APPT w/o PosIn]{ 
    \includegraphics[width=0.5\linewidth, height=0.4\linewidth]{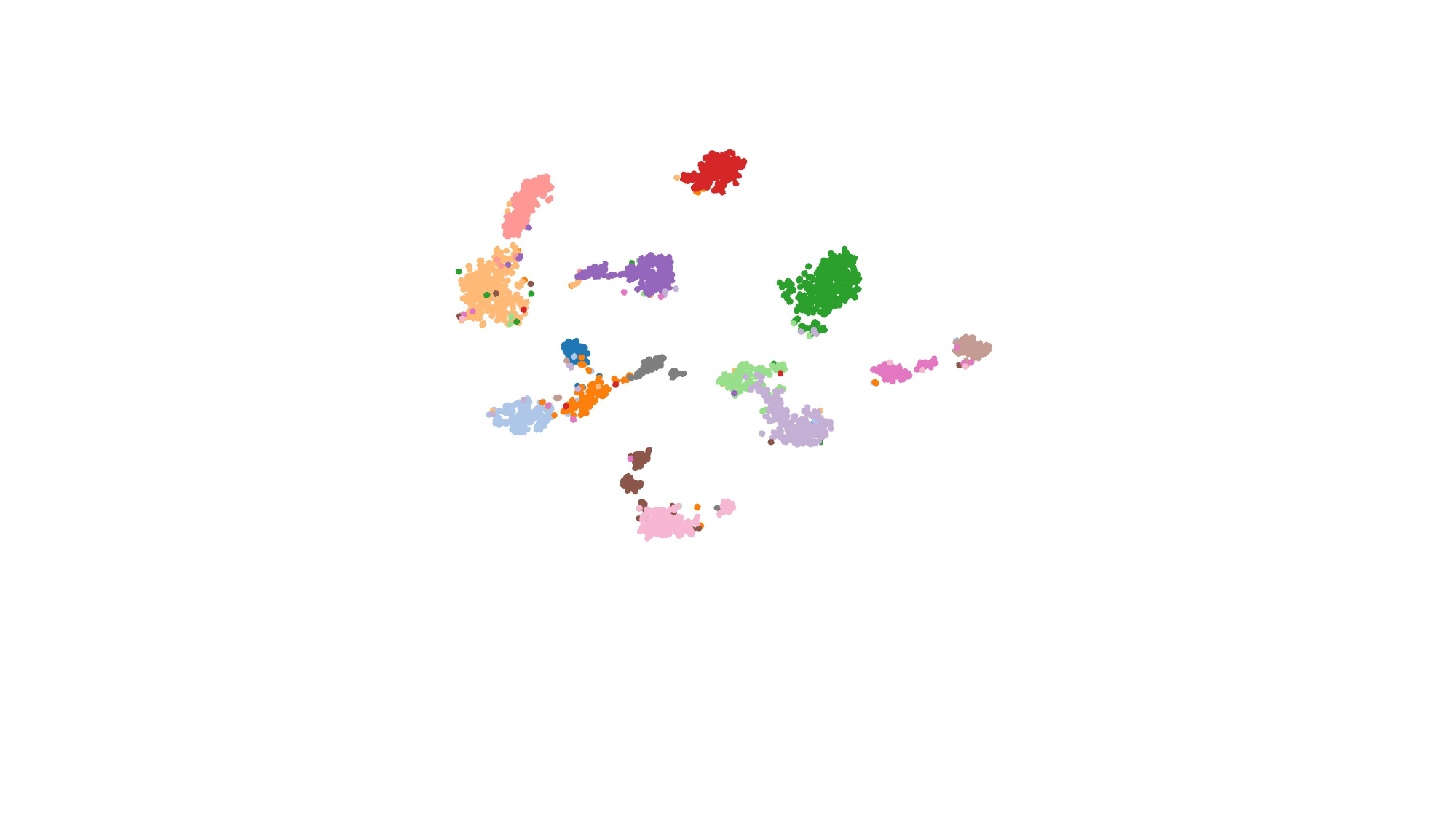}
    \label{fig:PT}
    }    
\subfloat[APPT]{
    \includegraphics[width=0.5\linewidth, height=0.4\linewidth]{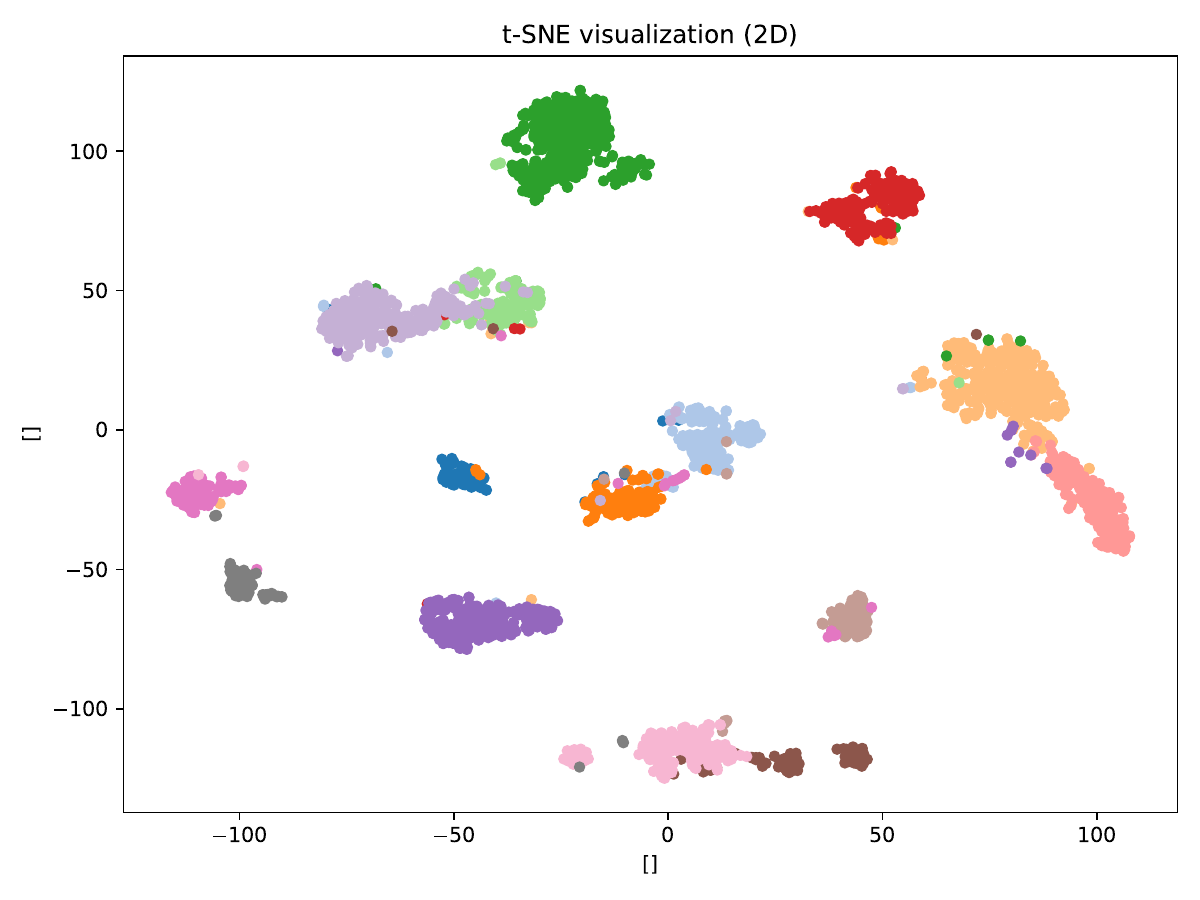}
    \label{fig:APPT}
    }   
\caption{T-SNE visualization of feature distributions. We show the results on the test set of ScanObjectNN.}
\label{fig:tsne}
\end{figure}

\begin{figure}[!t]
    \centering
    \includegraphics[width=1.\linewidth]{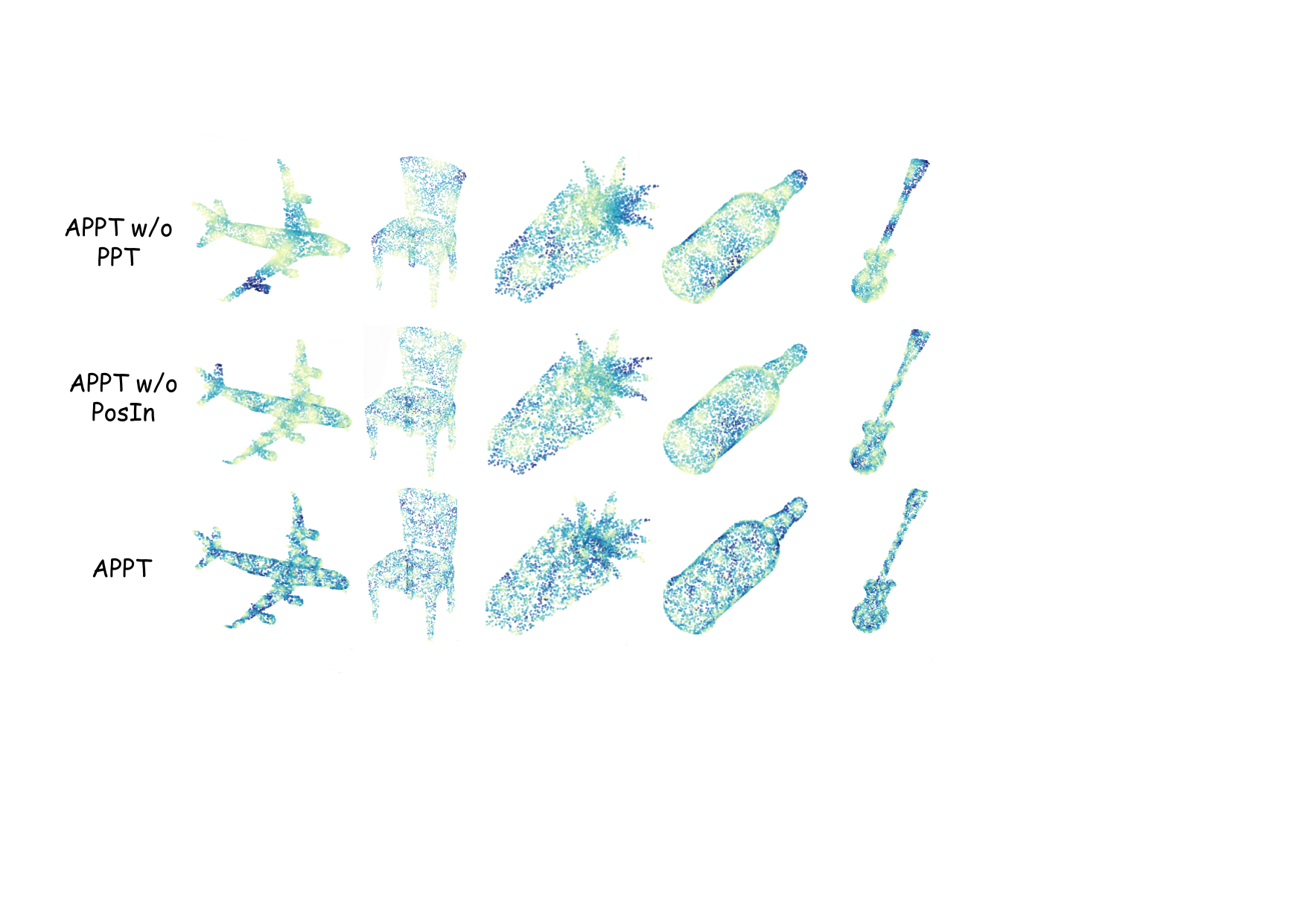}
    \caption{Visualization of the effectiveness of different modules.
    The blue color represents a higher response.}
    \label{fig:scatter}
\end{figure}

\noindent\textbf{Ablation Study of Individual Modules.}  
To systematically assess the contribution of each module within APPT, we conduct controlled experiments, with the experimental settings and results comprehensively outlined in Table~\ref{tab:abla}. 
The results demonstrate that each module plays a crucial role in enhancing the performance of the baseline method, which employs the point embedding (PE) module based on Point-PN. 
Notably, the pre-trained model on the 2D modality, along with the point-prompt tuning (PPT) and position injection (PosIn) modules yield substantial performance improvements across both datasets, highlighting their pivotal contributions to the overall effectiveness of APPT.

To further elucidate the contribution of each module, we visualize the feature distribution and the corresponding response on the original input point clouds, as shown in Figs.~\ref{fig:tsne} and \ref{fig:scatter}, respectively.
Specifically,  when the point embedding module (PE, namely Point-PN) is employed independently, the feature distribution across categories exhibits overlap, as shown in Fig.~\ref{fig:PN}. 
Fig.~\ref{fig:PI} illustrates the feature distribution after the PosIn module aligns with the ViT-B architecture, which is built upon the Point-PN framework and leverages the 2D pre-trained model. 
Meanwhile, Fig.~\ref{fig:PT} demonstrates the effectiveness of PPT module, utilizing the same PE module and 2D pre-trained model. 
When combined with the visualizations in Fig.~\ref{fig:scatter}, it becomes evident that PPT and PosIn focus on distinct regions of the object; however, both modules emphasize the object structure, thereby enhancing the separability of the learned representations.
This complementary focus underscores the synergistic contribution of PPT and PosIn to the overall performance of the framework.
Finally, Fig.~\ref{fig:APPT} demonstrates the combined effect of APPT. 
The third row of Fig.~\ref{fig:scatter} reveals that APPT captures a relatively complete and coherent overall structure of the object. 
This observation helps explain why APPT achieves significant improvement in classification but performs slightly inferior to Joint-MAE in segmentation, as APPT encoder prioritizes the global structure of the input over fine-grained local details.
Intuitively, the feature distribution boundaries obtained by APPT are more distinct, with a notable enhancement in feature separation.

\begin{table}[t]
 \caption{Performance comparison of different pre-trained modality (Pre. Mod.) on ScanObjectNN (PB-T50-RS).}  \label{tab:checkpoint}
 \centering  
 \setlength{\tabcolsep}{9pt}
 \renewcommand{\arraystretch}{1.2}
 \resizebox{1.\linewidth}{!}
 {
  \begin{tabular}{c|ccc}
  \hlinew{1pt}
  Method & Pre. Mod. & Model & Acc. (\%) \\
  \hline
   APF~\cite{Li2024APF} & 2D &  ViT-B~\cite{Dosovitskiy21vit} & 87.8 \\
    \hdashline
        & 2D & DINOv2~\cite{Oquab2023DINOv2LR} & 87.7\\
        & 2D & DeiT~\cite{Touvron2020TrainingDI} & 87.3\\
  Any2Point~\cite{tang2025any2point} &  1D (Aud.) & ImageBind~\cite{girdhar2023imagebind} & 87.0\\
     & 1D (Text) & CLIP~\cite{radford2021clip} &  91.9\\ 
     & 1D (Text) & RoBERTa~\cite{Liu2019RoBERTaAR} &  89.7\\ 
  \hdashline
   & 2D & ViT-B~\cite{Dosovitskiy21vit} & 92.6\\
   & 2D & DINOv2~\cite{Oquab2023DINOv2LR} & 92.6\\
  APPT (Ours) & 2D & DeiT~\cite{Touvron2020TrainingDI} & 88.9\\
  & 1D (Aud.) & ImageBind~\cite{girdhar2023imagebind} &  88.9\\ 
  & 1D (Text) & CLIP~\cite{radford2021clip} &  91.4\\ 
   & 1D (Text) & RoBERTa~\cite{Liu2019RoBERTaAR} &  87.3\\ 
  \hlinew{1pt}
 \end{tabular}
 }
\end{table}
\begin{table}[t]
 \caption{Comparison results of different pre-trained models on ScanObjectNN PB-T50-RS (SONN) and ModelNet~(MN40) datasets. }  \label{tab:analysis}
 \centering  
 \setlength{\tabcolsep}{10pt}
 \renewcommand{\arraystretch}{1.2}
 \resizebox{0.9\linewidth}{!}
 {
  \begin{tabular}{l|cccc}
  \hlinew{1pt}
  \multirow{2}{*}{Method}   & \multirow{2}{*}{Model} & SONN & MN40  \\
  \cline{3-4}
              &             & Acc. (\%) & Acc. (\%) \\
  \hline
  Point-PN            & N/A  & 87.1  & 93.8   \\
  Transformer          & N/A & 77.2   & 91.4   \\
  APPT \textit{w. 2D}  & ViT-B & 92.6 \color{deepred}($\uparrow$ 5.5) & 94.2 \color{deepred}($\uparrow$ 0.4)\\ 
  APPT \textit{w. 1D}  & CLIP  & 91.4 \color{deepred}($\uparrow$ 4.3) & 95.1 \color{deepred}($\uparrow$ 1.3)\\  
 \hlinew{1pt}
 \end{tabular}
 }
\end{table}

\begin{table}[t]
\caption{Performance comparison w.r.t. trainable parameters number (\# Tr. param.) on ScanObjectNN (PB-T50-RS). 
}  \label{tab:params}  
 \centering  
 \renewcommand{\arraystretch}{1.12}
 \resizebox{1.\linewidth}{!}
 {
  \begin{tabular}{l|ccc}    
  \hlinew{1pt}
   Method & Pre. Mod.  & \# Tr. Param. & Acc. (\%)\\
  \hline
  PointNet++  &  N/A & 1.4M & 77.9\\  
  PointMLP & N/A   & 12.6M & 85.2\\ 
  \hdashline
  DGCNN-OcCo & 3D & 1.8M & 83.9\\
  Point-BERT& 3D  & 21.1M & 83.1\\
  Point-MAE & 3D  & 21.1M & 85.2\\
  \hdashline
  P2P w. ViT-B & 2D  & \underline{\textbf{0.25M}} & 84.1\\
  \makecell[l]{P2P w.\\ \: HorNet-L-22k-mlp}
    & 2D  & 1.2M  & 89.3\\
  Any2Point & 2D & \textbf{0.8M} & 87.7 \\
  APF w. PointNet & 2D & 2.4M & 83.1 \\
  APF w. PointMLP & 2D  & 5.8M & \textbf{87.8}\\   
  APPT (ours) & 2D  & 3.4M & \underline{\textbf{92.6}}\\  
  \hlinew{1pt}
 \end{tabular}
 }
\end{table}

\noindent\textbf{The Impact of Different Foundation Models.}
We compare the performance of APPT across different pre-trained foundation models on ScanObjectNN PB-T50-RS dataset. 
The corresponding results are summarized in Table~\ref{tab:checkpoint}, which also includes comparisons with other methods using the same pre-trained models. 
Except when leveraging textual pre-trained knowledge, APPT consistently outperforms all other methods with the same pre-trained foundation models.
For example, with the 2D prior, APPT outperforms APF (pre-trained on ViT-B) by 4.8\% and Any2Point (pre-trained on DINOv2) by 4.9\%.
Although APPT slightly lags behind Any2Point when using textual pre-trained knowledge on ScanObjectNN (91.4\% vs. 91.7\% and 87.3\% vs. 89.7\%), it significantly outperforms Any2Point when leveraging the 1D audio prior (88.9\% vs. 87.0\%). 
On ModelNet40 (see Table~\ref{tab:com_cls_SONN_MN40}), APPT also achieves superior performance compared to Any2Point with text prior (95.1\% vs. 94.3\%).
Furthermore, experiments with other pre-trained base models, such as DeiT~\cite{Touvron2020TrainingDI} (visual prior) and ImageBind~\cite{girdhar2023imagebind} (audio prior), show that APPT consistently outperforms the baseline method (88.9\% and 88.9\% vs. 87.1\%) by a clear margin. 
Additionally, Table~\ref{tab:analysis} provides a comparison of APPT with baseline methods, demonstrating its performance improvement with the use of multiple modalities.
These results underscore the robustness and versatility of APPT across diverse pre-trained models and modalities.

\noindent\textbf{Comparison of Trainable Parameters.}
Table~\ref{tab:params} provides a comparison of APPT with SOTA methods based on pre-trained foundation models, with a focus on the number of trainable parameters.
In contrast to P2P and Any2Point, our method introduces more parameters during point token embedding, yet yields a notable performance improvement. 
On the other hand, APPT significantly reduces the number of training parameters compared to Point-MAE and Point-BERT, while simultaneously delivering notable performance gains, attributed to its efficient fine-tuning strategy. 
Additionally, compared to APF, APPT further reduces the number of trainable parameters and improves model performance through the implementation of a shared weights strategy.
Improving the efficiency of training parameters will remain a primary focus of our future research.

\section{Conclusion}
This paper has proposed an innovative PEFT architecture, APPT, designed to effectively leverage diverse pre-trained foundation models for 3D understanding tasks.
It leverages the rich semantic information embedded in large pre-trained models to efficiently enhance 3D understanding tasks, thereby addressing the challenges of data scarcity and overfitting often faced by 3D pre-trained models.
APPT departs from the existing projection-based method by adopting a point embedding module to maximize the retention of high-dimensional structural information from point clouds. 
A permutation-invariant feature is then utilized to determine the relative positions of point embeddings, enhancing the understanding of point cloud structures while effectively leveraging the priors embedded in heterogeneous pre-trained models.
The attention mechanism of the pre-trained large model is adapted through point-prompts generated by a shared weights prompt generator, ensuring efficient and scalable integration of pre-trained knowledge.
Extensive experiments have demonstrated that APPT exhibits strong generalization capabilities across various heterogeneous foundation models, achieving significant performance improvements in 3D understanding tasks.

\bibliographystyle{IEEEtran}
\bibliography{reference}

\begin{thebibliography}{10}
\providecommand{\url}[1]{#1}
\csname url@samestyle\endcsname
\providecommand{\newblock}{\relax}
\providecommand{\bibinfo}[2]{#2}
\providecommand{\BIBentrySTDinterwordspacing}{\spaceskip=0pt\relax}
\providecommand{\BIBentryALTinterwordstretchfactor}{4}
\providecommand{\BIBentryALTinterwordspacing}{\spaceskip=\fontdimen2\font plus
\BIBentryALTinterwordstretchfactor\fontdimen3\font minus \fontdimen4\font\relax}
\providecommand{\BIBforeignlanguage}[2]{{%
\expandafter\ifx\csname l@#1\endcsname\relax
\typeout{** WARNING: IEEEtran.bst: No hyphenation pattern has been}%
\typeout{** loaded for the language `#1'. Using the pattern for}%
\typeout{** the default language instead.}%
\else
\language=\csname l@#1\endcsname
\fi
#2}}
\providecommand{\BIBdecl}{\relax}
\BIBdecl

\bibitem{chen2020tuning}
T.~Chen, S.~Kornblith, M.~Norouzi, and G.~E. Hinton, ``A simple framework for contrastive learning of visual representations,'' in \emph{Int. Conf. Mach. Learn.}, 2020, pp. 1597--1607.

\bibitem{HuSWALWWC22LoRA}
E.~J. Hu, Y.~Shen, P.~Wallis, Z.~Allen-Zhu, Y.~Li, S.~Wang, L.~Wang, and W.~Chen, ``{LoRA}: Low-rank adaptation of large language models,'' in \emph{Int. Conf. Learn. Represent.}, 2022.

\bibitem{chen2022adaptformer}
S.~Chen, C.~Ge, Z.~Tong, J.~Wang, Y.~Song, J.~Wang, and P.~Luo, ``Adaptformer: Adapting vision transformers for scalable visual recognition,'' \emph{Adv. Neural Inform. Process. Syst.}, vol.~35, pp. 16\,664--16\,678, 2022.

\bibitem{yu2023visual}
B.~X. Yu, J.~Chang, H.~Wang, L.~Liu, S.~Wang, Z.~Wang, J.~Lin, L.~Xie, H.~Li, Z.~Lin \emph{et~al.}, ``Visual tuning,'' \emph{{ACM} Comput. Surv.}, vol.~56, no.~12, pp. 297:1--297:38, 2024.

\bibitem{liu2023pre}
P.~Liu, W.~Yuan, J.~Fu, Z.~Jiang, H.~Hayashi, and G.~Neubig, ``Pre-train, prompt, and predict: A systematic survey of prompting methods in natural language processing,'' \emph{{ACM} Comput. Surv.}, vol.~55, no.~9, pp. 1--35, 2023.

\bibitem{devlin2018bert}
J.~Devlin, M.-W. Chang, K.~Lee, and K.~Toutanova, ``Bert: Pre-training of deep bidirectional transformers for language understanding,'' \emph{arXiv preprint arXiv:1810.04805}, 2018.

\bibitem{Floridi2020GPT-3}
L.~Floridi and M.~Chiriatti, ``Gpt-3: Its nature, scope, limits, and consequences,'' \emph{Minds and Machines}, vol.~30, pp. 681--694, 2020.

\bibitem{Dosovitskiy21vit}
D.~Alexey, B.~Lucas, K.~Alexander, W.~Dirk, Z.~Xiaohua, U.~Thomas, D.~Mostafa, M.~Matthias, H.~Georg, G.~Sylvain \emph{et~al.}, ``An image is worth 16x16 words: Transformers for image recognition at scale,'' in \emph{Int. Conf. Learn. Represent.}, 2021.

\bibitem{radford2021clip}
A.~Radford, J.~W. Kim, C.~Hallacy, A.~Ramesh, G.~Goh, S.~Agarwal, G.~Sastry, A.~Askell, P.~Mishkin, J.~Clark \emph{et~al.}, ``Learning transferable visual models from natural language supervision,'' in \emph{Int. Conf. Learn. Represent.}, 2021, pp. 8748--8763.

\bibitem{Oquab2023DINOv2LR}
M.~Oquab, T.~Darcet, T.~Moutakanni, H.~Vo, M.~Szafraniec, V.~Khalidov, P.~Fernandez, D.~Haziza, F.~Massa, A.~El-Nouby \emph{et~al.}, ``{DINOv2}: Learning robust visual features without supervision,'' \emph{Trans. Mach. Learn. Res.}, vol. 2024, 2024.

\bibitem{guo2020deep}
Y.~Guo, H.~Wang, Q.~Hu, H.~Liu, L.~Liu, and M.~Bennamoun, ``Deep learning for 3d point clouds: A survey,'' \emph{IEEE Trans. Pattern Anal. Mach. Intell.}, vol.~43, no.~12, pp. 4338--4364, 2020.

\bibitem{yu2022point}
X.~Yu, L.~Tang, Y.~Rao, T.~Huang, J.~Zhou, and J.~Lu, ``Point-bert: Pre-training 3d point cloud transformers with masked point modeling,'' in \emph{IEEE/CVF Conf. Comput. Vis. Pattern Recog.}, 2022, pp. 19\,313--19\,322.

\bibitem{wangHC2021occo}
H.~Wang, Q.~Liu, X.~Yue, J.~Lasenby, and M.~J. Kusner, ``Unsupervised point cloud pre-training via occlusion completion,'' in \emph{Int. Conf. Comput. Vis.}, 2021, pp. 9782--9792.

\bibitem{NEURIPS2023pointgpt}
G.~Chen, M.~Wang, Y.~Yang, K.~Yu, L.~Yuan, and Y.~Yue, ``Pointgpt: Auto-regressively generative pre-training from point clouds,'' in \emph{Adv. Neural Inform. Process. Syst.}, vol.~36, 2023, pp. 29\,667--29\,679.

\bibitem{tang2025any2point}
Y.~Tang, R.~Zhang, J.~Liu, Z.~Guo, B.~Zhao, Z.~Wang, P.~Gao, H.~Li, D.~Wang, and X.~Li, ``Any2point: Empowering any-modality large models for efficient 3d understanding,'' in \emph{Eur. Conf. Comput. Vis.}, 2024, pp. 456--473.

\bibitem{Ziyi21P2P}
Z.~Wang, X.~Yu, Y.~Rao, J.~Zhou, and J.~Lu, ``{P2P:} tuning pre-trained image models for point cloud analysis with point-to-pixel prompting,'' \emph{Adv. Neural Inform. Process. Syst.}, 2022.

\bibitem{Zhang2023Flattening-Net}
Q.~Zhang, J.~Hou, Y.~Qian, Y.~Zeng, J.~Zhang, and Y.~He, ``Flattening-net: Deep regular 2d representation for 3d point cloud analysis,'' \emph{IEEE Trans. Pattern Anal. Mach. Intell.}, vol.~45, no.~8, pp. 9726--9742, 2023.

\bibitem{Wang2024point-to-pixel}
Z.~Wang, Y.~Rao, X.~Yu, J.~Zhou, and J.~Lu, ``Point-to-pixel prompting for point cloud analysis with pre-trained image models,'' \emph{IEEE Trans. Pattern Anal. Mach. Intell.}, vol.~46, no.~6, pp. 4381--4397, 2024.

\bibitem{XuRS2024PointLLM}
R.~Xu, X.~Wang, T.~Wang, Y.~Chen, J.~Pang, and D.~Lin, ``Pointllm: Empowering large language models to understand point clouds,'' in \emph{Eur. Conf. Comput. Vis.}, 2024, pp. 131--147.

\bibitem{ZhangI2PMAE23}
R.~Zhang, L.~Wang, Y.~Qiao, P.~Gao, and H.~Li, ``Learning {3D} representations from {2D} pre-trained models via image-to-point masked autoencoders,'' in \emph{IEEE/CVF Conf. Comput. Vis. Pattern Recog.}, 2023, pp. 21\,769--21\,780.

\bibitem{Liu2023OpenShape}
M.~Liu, R.~Shi, K.~Kuang, Y.~Zhu, X.~Li, S.~Han, H.~Cai, F.~Porikli, and H.~Su, ``Openshape: Scaling up 3d shape representation towards open-world understanding,'' in \emph{Adv. Neural Inform. Process. Syst.}, vol.~36, 2023, pp. 44\,860--44\,879.

\bibitem{Umam2024PartDistill}
A.~Umam, C.-K. Yang, M.-H. Chen, J.-H. Chuang, and Y.-Y. Lin, ``Partdistill: 3d shape part segmentation by vision-language model distillation,'' in \emph{IEEE/CVF Conf. Comput. Vis. Pattern Recog.}, 2024, pp. 3470--3479.

\bibitem{xue2023ulip}
L.~Xue, M.~Gao, C.~Xing, R.~Mart{\'\i}n-Mart{\'\i}n, J.~Wu, C.~Xiong, R.~Xu, J.~C. Niebles, and S.~Savarese, ``{ULIP}: Learning a unified representation of language, images, and point clouds for 3d understanding,'' in \emph{IEEE/CVF Conf. Comput. Vis. Pattern Recog.}, 2023, pp. 1179--1189.

\bibitem{Xue2024ulip2}
L.~Xue, N.~Yu, S.~Zhang, A.~Panagopoulou, J.~Li, R.~Mart{\'\i}n-Mart{\'\i}n, J.~Wu, C.~Xiong, R.~Xu, J.~C. Niebles, and S.~Savarese, ``{ULIP}-2: Towards scalable multimodal pre-training for 3d understanding,'' in \emph{IEEE/CVF Conf. Comput. Vis. Pattern Recog.}, June 2024, pp. 27\,091--27\,101.

\bibitem{QiNIPS2017pointnet2}
C.~R. Qi, L.~Yi, H.~Su, and L.~J. Guibas, ``Pointnet++: Deep hierarchical feature learning on point sets in a metric space,'' \emph{Adv. Neural Inform. Process. Syst.}, vol.~30, 2017.

\bibitem{zaheer2017deep}
M.~Zaheer, S.~Kottur, S.~Ravanbakhsh, B.~Poczos, R.~R. Salakhutdinov, and A.~J. Smola, ``Deep sets,'' \emph{Adv. Neural Inform. Process. Syst.}, vol.~30, 2017.

\bibitem{Li2024APF}
M.~Li, D.~Li, G.~Yang, Y.~Cheung, and H.~Huang, ``Adapt pointformer: 3d point cloud analysis via adapting 2d visual transformers,'' in \emph{Eur. Conf. Artif. Intell.}, vol. 392, 2024, pp. 89--96.

\bibitem{qi2017pointnet}
C.~R. Qi, H.~Su, K.~Mo, and L.~J. Guibas, ``Pointnet: Deep learning on point sets for 3d classification and segmentation,'' in \emph{IEEE/CVF Conf. Comput. Vis. Pattern Recog.}, 2017, pp. 652--660.

\bibitem{Liu2019pointvoxel}
Z.~Liu, H.~Tang, Y.~Lin, and S.~Han, ``Point-voxel cnn for efficient 3d deep learning,'' \emph{Adv. Neural Inform. Process. Syst.}, vol.~32, 2019.

\bibitem{Shi2020CVPR}
S.~Shi, C.~Guo, L.~Jiang, Z.~Wang, J.~Shi, X.~Wang, and H.~Li, ``Pv-rcnn: Point-voxel feature set abstraction for 3d object detection,'' in \emph{IEEE/CVF Conf. Comput. Vis. Pattern Recog.}, 2020, pp. 10\,529--10\,538.

\bibitem{ranH2022surface}
H.~Ran, J.~Liu, and C.~Wang, ``Surface representation for point clouds,'' in \emph{IEEE/CVF Conf. Comput. Vis. Pattern Recog.}, 2022, pp. 18\,942--18\,952.

\bibitem{li2023bevdepth}
Y.~Li, Z.~Ge, G.~Yu, J.~Yang, Z.~Wang, Y.~Shi, J.~Sun, and Z.~Li, ``Bevdepth: Acquisition of reliable depth for multi-view 3d object detection,'' in \emph{AAAI Conf. Artif. Intell.}, vol.~37, no.~2, 2023, pp. 1477--1485.

\bibitem{qianG2022pointnext}
G.~Qian, Y.~Li, H.~Peng, J.~Mai, H.~Hammoud, M.~Elhoseiny, and B.~Ghanem, ``Pointnext: Revisiting pointnet++ with improved training and scaling strategies,'' \emph{Adv. Neural Inform. Process. Syst.}, vol.~35, pp. 23\,192--23\,204, 2022.

\bibitem{Wang2019Dynamic}
Y.~Wang, Y.~Sun, Z.~Liu, S.~E. Sarma, M.~M. Bronstein, and J.~M. Solomon, ``Dynamic graph cnn for learning on point clouds,'' \emph{ACM Trans. Graph.}, vol.~38, no.~5, pp. 1--12, 2019.

\bibitem{ThomasH2019KPConv}
H.~Thomas, C.~R. Qi, J.-E. Deschaud, B.~Marcotegui, F.~Goulette, and L.~J. Guibas, ``Kpconv: Flexible and deformable convolution for point clouds,'' in \emph{Int. Conf. Comput. Vis.}, 2019, pp. 6411--6420.

\bibitem{vaswani2017attention}
A.~Vaswani, N.~Shazeer, N.~Parmar, J.~Uszkoreit, L.~Jones, A.~N. Gomez, {\L}.~Kaiser, and I.~Polosukhin, ``Attention is all you need,'' \emph{Adv. Neural Inform. Process. Syst.}, vol.~30, 2017.

\bibitem{zhao2021point}
H.~Zhao, L.~Jiang, J.~Jia, P.~H. Torr, and V.~Koltun, ``Point transformer,'' in \emph{Int. Conf. Comput. Vis.}, 2021, pp. 16\,259--16\,268.

\bibitem{guoMH2021pct}
M.-H. Guo, J.-X. Cai, Z.-N. Liu, T.-J. Mu, R.~R. Martin, and S.-M. Hu, ``Pct: Point cloud transformer,'' \emph{Comput. Vis. Media}, vol.~7, pp. 187--199, 2021.

\bibitem{choe2022pointmixer}
J.~Choe, C.~Park, F.~Rameau, J.~Park, and I.~S. Kweon, ``Pointmixer: Mlp-mixer for point cloud understanding,'' in \emph{Eur. Conf. Comput. Vis.}, 2022, pp. 620--640.

\bibitem{wu2022ptv2}
X.~Wu, Y.~Lao, L.~Jiang, X.~Liu, and H.~Zhao, ``Point transformer v2: Grouped vector attention and partition-based pooling,'' \emph{Adv. Neural Inform. Process. Syst.}, vol.~35, pp. 33\,330--33\,342, 2022.

\bibitem{duan2023condaformer}
L.~Duan, S.~Zhao, N.~Xue, M.~Gong, G.-S. Xia, and D.~Tao, ``Condaformer: Disassembled transformer with local structure enhancement for 3d point cloud understanding,'' \emph{Adv. Neural Inform. Process. Syst.}, vol.~36, 2023.

\bibitem{han2024mamba3d}
X.~Han, Y.~Tang, Z.~Wang, and X.~Li, ``Mamba3d: Enhancing local features for 3d point cloud analysis via state space model,'' in \emph{ACM Int. Conf. Multimedia}, 2024, pp. 4995--5004.

\bibitem{wu2024ptv3}
X.~Wu, L.~Jiang, P.-S. Wang, Z.~Liu, X.~Liu, Y.~Qiao, W.~Ouyang, T.~He, and H.~Zhao, ``Point transformer v3: Simpler faster stronger,'' in \emph{Proceedings of the IEEE/CVF Conference on Computer Vision and Pattern Recognition}, 2024, pp. 4840--4851.

\bibitem{wang2022multimodal}
Y.~Wang, X.~Chen, L.~Cao, W.~Huang, F.~Sun, and Y.~Wang, ``Multimodal token fusion for vision transformers,'' in \emph{IEEE/CVF Conf. Comput. Vis. Pattern Recog.}, 2022, pp. 12\,186--12\,195.

\bibitem{ren2024pointofview}
H.~Ren, J.~Wang, M.~Yang, and S.~Velipasalar, ``Pointofview: A multi-modal network for few-shot 3d point cloud classification fusing point and multi-view image features,'' in \emph{IEEE/CVF Conf. Comput. Vis. Pattern Recog.}, 2024, pp. 784--793.

\bibitem{li2023ashapeformer}
Z.~Li, H.~Yu, Z.~Yang, T.~Chen, and N.~Akhtar, ``Ashapeformer: Semantics-guided object-level active shape encoding for 3d object detection via transformers,'' in \emph{IEEE/CVF Conf. Comput. Vis. Pattern Recog.}, 2023, pp. 1012--1021.

\bibitem{Zheng2024Diffusion}
X.~Zheng, X.~Huang, G.~Mei, Y.~Hou, Z.~Lyu, B.~Dai, W.~Ouyang, and Y.~Gong, ``Point cloud pre-training with diffusion models,'' in \emph{IEEE/CVF Conf. Comput. Vis. Pattern Recog.}, 2024, pp. 22\,935--22\,945.

\bibitem{tang2024PointPEFT}
Y.~Tang, R.~Zhang, Z.~Guo, X.~Ma, B.~Zhao, Z.~Wang, D.~Wang, and X.~Li, ``Point-peft: Parameter-efficient fine-tuning for 3d pre-trained models,'' in \emph{AAAI Conf. Artif. Intell.}, vol.~38, no.~6, 2024, pp. 5171--5179.

\bibitem{pangYT22PointMAE}
Y.~Pang, W.~Wang, F.~E. Tay, W.~Liu, Y.~Tian, and L.~Yuan, ``Masked autoencoders for point cloud self-supervised learning,'' in \emph{Eur. Conf. Comput. Vis.}, 2022, pp. 604--621.

\bibitem{NEURIPS2022PointM2AE}
R.~Zhang, Z.~Guo, P.~Gao, R.~Fang, B.~Zhao, D.~Wang, Y.~Qiao, and H.~Li, ``Point-m2ae: Multi-scale masked autoencoders for hierarchical point cloud pre-training,'' in \emph{Adv. Neural Inform. Process. Syst.}, vol.~35, 2022, pp. 27\,061--27\,074.

\bibitem{zhang2022pointclip}
R.~Zhang, Z.~Guo, W.~Zhang, K.~Li, X.~Miao, B.~Cui, Y.~Qiao, P.~Gao, and H.~Li, ``{PointCLIP}: Point cloud understanding by clip,'' in \emph{IEEE/CVF Conf. Comput. Vis. Pattern Recog.}, 2022, pp. 8542--8552.

\bibitem{zhu2023pointclipv2}
X.~Zhu, R.~Zhang, B.~He, Z.~Guo, Z.~Zeng, Z.~Qin, S.~Zhang, and P.~Gao, ``Pointclip v2: Prompting clip and gpt for powerful 3d open-world learning,'' in \emph{Int. Conf. Comput. Vis.}, 2023, pp. 2639--2650.

\bibitem{wei2020view-Gcn}
X.~Wei, R.~Yu, and J.~Sun, ``View-{GCN}: View-based graph convolutional network for {3D} shape analysis,'' in \emph{IEEE/CVF Conf. Comput. Vis. Pattern Recog.}, 2020, pp. 1847--1856.

\bibitem{yang2020predicting}
Q.~Yang, H.~Chen, Z.~Ma, Y.~Xu, R.~Tang, and J.~Sun, ``Predicting the perceptual quality of point cloud: A 3d-to-2d projection-based exploration,'' \emph{IEEE Trans. Multimedia}, vol.~23, pp. 3877--3891, 2020.

\bibitem{yu2022data}
P.-C. Yu, C.~Sun, and M.~Sun, ``Data efficient 3d learner via knowledge transferred from 2d model,'' in \emph{Eur. Conf. Comput. Vis.}, 2022, pp. 182--198.

\bibitem{dong2023act}
R.~Dong, Z.~Qi, L.~Zhang, J.~Zhang, J.~Sun, Z.~Ge, L.~Yi, and K.~Ma, ``Autoencoders as cross-modal teachers: Can pretrained 2d image transformers help 3d representation learning?'' in \emph{Int. Conf. Learn. Represent.}, 2023.

\bibitem{jia2022visual}
M.~Jia, L.~Tang, B.-C. Chen, C.~Cardie, S.~Belongie, B.~Hariharan, and S.-N. Lim, ``Visual prompt tuning,'' in \emph{Eur. Conf. Comput. Vis.}, 2022, pp. 709--727.

\bibitem{Alexey2021vit}
A.~Dosovitskiy, L.~Beyer, A.~Kolesnikov, D.~Weissenborn, X.~Zhai, T.~Unterthiner, M.~Dehghani, M.~Minderer, G.~Heigold, S.~Gelly, and N.~H. Jakob~Uszkoreit, ``An image is worth 16x16 words: Transformers for image recognition at scale,'' in \emph{Int. Conf. Learn. Represent.}, 2021.

\bibitem{chen2021developing}
X.~Chen, Y.~Wu, Z.~Wang, S.~Liu, and J.~Li, ``Developing real-time streaming transformer transducer for speech recognition on large-scale dataset,'' in \emph{IEEE Int. Conf. Acoust. Speech Signal Process.}, 2021, pp. 5904--5908.

\bibitem{ba2016layer}
J.~L. Ba, J.~R. Kiros, and G.~E. Hinton, ``Layer normalization,'' \emph{arXiv preprint arXiv:1607.06450}, 2016.

\bibitem{he2016deep}
K.~He, X.~Zhang, S.~Ren, and J.~Sun, ``Deep residual learning for image recognition,'' in \emph{IEEE/CVF Conf. Comput. Vis. Pattern Recog.}, 2016, pp. 770--778.

\bibitem{maXQ22PointMLP}
X.~Ma, C.~Qin, H.~You, H.~Ran, and Y.~Fu, ``Rethinking network design and local geometry in point cloud: A simple residual mlp framework,'' in \emph{Int. Conf. Learn. Represent.}, 2022.

\bibitem{Zhang2023ParameterIN}
R.~Zhang, L.~Wang, Y.~Wang, P.~Gao, H.~Li, and J.~Shi, ``Starting from non-parametric networks for 3d point cloud analysis,'' in \emph{IEEE/CVF Conf. Comput. Vis. Pattern Recog.}, 2023, pp. 5344--5353.

\bibitem{Yuan21Tokens2Token}
L.~Yuan, Y.~Chen, T.~Wang, W.~Yu, Y.~Shi, Z.-H. Jiang, F.~E. Tay, J.~Feng, and S.~Yan, ``Tokens-to-token vit: Training vision transformers from scratch on imagenet,'' in \emph{Int. Conf. Comput. Vis.}, October 2021, pp. 558--567.

\bibitem{bahng2022exploring}
H.~Bahng, A.~Jahanian, S.~Sankaranarayanan, and P.~Isola, ``Exploring visual prompts for adapting large-scale models,'' \emph{arXiv preprint arXiv:2203.17274}, 2022.

\bibitem{LeeDJ2023read}
D.~Lee, S.~Song, J.~Suh, J.~Choi, S.~Lee, and H.~J. Kim, ``Read-only prompt optimization for vision-language few-shot learning,'' in \emph{IEEE/CVF Conf. Comput. Vis. Pattern Recog.}, 2023, pp. 1401--1411.

\bibitem{DongB2022lpt}
B.~Dong, P.~Zhou, S.~Yan, and W.~Zuo, ``{LPT}: Long-tailed prompt tuning for image classification,'' in \emph{Int. Conf. Learn. Represent.}, 2022.

\bibitem{shi2024LIFT}
J.-X. Shi, T.~Wei, Z.~Zhou, J.-J. Shao, X.-Y. Han, and Y.-F. Li, ``Long-tail learning with foundation model: Heavy fine-tuning hurts,'' in \emph{Forty-first International Conference on Machine Learning}, 2024.

\bibitem{li2024GNMPT}
M.~Li, Y.~Liu, Y.~Lu, Y.~Zhang, Y.-m. Cheung, and H.~Huang, ``Improving visual prompt tuning by gaussian neighborhood minimization for long-tailed visual recognition,'' \emph{arXiv preprint arXiv:2410.21042}, 2024.

\bibitem{Xiang2021PrefixTuning}
X.~L. Li and P.~Liang, ``Prefix-tuning: Optimizing continuous prompts for generation,'' in \emph{ACL/IJCNLP}, 2021, pp. 4582--4597.

\bibitem{liu2021swin}
Z.~Liu, Y.~Lin, Y.~Cao, H.~Hu, Y.~Wei, Z.~Zhang, S.~Lin, and B.~Guo, ``Swin transformer: Hierarchical vision transformer using shifted windows,'' in \emph{Int. Conf. Comput. Vis.}, 2021, pp. 10\,012--10\,022.

\bibitem{NIPS2017DeepSets}
M.~Zaheer, S.~Kottur, S.~Ravanbakhsh, B.~Poczos, R.~R. Salakhutdinov, and A.~J. Smola, ``Deep sets,'' in \emph{Adv. Neural Inform. Process. Syst.}, vol.~30, 2017.

\bibitem{NIPS2019DeepSetNet}
Y.~Zhang, J.~Hare, and A.~Prugel-Bennett, ``Deep set prediction networks,'' in \emph{Adv. Neural Inform. Process. Syst.}, vol.~32, 2019.

\bibitem{GuoZQLH23JointMAE}
G.~Ziyu, Z.~Renrui, Q.~Longtian, L.~Xianzhi, and H.~Pheng{-}Ann, ``Joint-mae: 2d-3d joint masked autoencoders for 3d point cloud pre-training,'' in \emph{Int. Joint Conf. Artif. Intell.}, 2023, pp. 791--799.

\bibitem{Xin2024DAPT}
X.~Zhou, D.~Liang, W.~Xu, X.~Zhu, Y.~Xu, Z.~Zou, and X.~Bai, ``Dynamic adapter meets prompt tuning: Parameter-efficient transfer learning for point cloud analysis,'' in \emph{IEEE/CVF Conf. Comput. Vis. Pattern Recog.}, 2024, pp. 14\,707--14\,717.

\bibitem{qi2023contrast}
Z.~Qi, R.~Dong, G.~Fan, Z.~Ge, X.~Zhang, K.~Ma, and L.~Yi, ``Contrast with reconstruct: Contrastive 3d representation learning guided by generative pretraining,'' in \emph{Int. Conf. Mach. Learn.}, 2023, pp. 28\,223--28\,243.

\bibitem{AfhamM22CrossPoint}
M.~Afham, I.~Dissanayake, D.~Dissanayake, A.~Dharmasiri, K.~Thilakarathna, and R.~Rodrigo, ``Crosspoint: Self-supervised cross-modal contrastive learning for 3d point cloud understanding,'' in \emph{IEEE/CVF Conf. Comput. Vis. Pattern Recog.}, June 2022, pp. 9902--9912.

\bibitem{uy2019revisiting}
M.~A. Uy, Q.-H. Pham, B.-S. Hua, T.~Nguyen, and S.-K. Yeung, ``Revisiting point cloud classification: A new benchmark dataset and classification model on real-world data,'' in \emph{Int. Conf. Comput. Vis.}, 2019, pp. 1588--1597.

\bibitem{wuZR20153d}
Z.~Wu, S.~Song, A.~Khosla, F.~Yu, L.~Zhang, X.~Tang, and J.~Xiao, ``{3D} shapenets: A deep representation for volumetric shapes,'' in \emph{IEEE/CVF Conf. Comput. Vis. Pattern Recog.}, 2015, pp. 1912--1920.

\bibitem{yi2016scalable}
L.~Yi, V.~G. Kim, D.~Ceylan, I.-C. Shen, M.~Yan, H.~Su, C.~Lu, Q.~Huang, A.~Sheffer, and L.~Guibas, ``A scalable active framework for region annotation in 3d shape collections,'' \emph{ACM Trans. Graph.}, vol.~35, no.~6, pp. 1--12, 2016.

\bibitem{xuM2021paconv}
M.~Xu, R.~Ding, H.~Zhao, and X.~Qi, ``Paconv: Position adaptive convolution with dynamic kernel assembling on point clouds,'' in \emph{IEEE/CVF Conf. Comput. Vis. Pattern Recog.}, 2021, pp. 3173--3182.

\bibitem{russakovsky2015imagenet}
O.~Russakovsky, J.~Deng, H.~Su, J.~Krause, S.~Satheesh, S.~Ma, Z.~Huang, A.~Karpathy, A.~Khosla, M.~Bernstein \emph{et~al.}, ``Imagenet large scale visual recognition challenge,'' \emph{Int. J. Comput. Vis.}, vol. 115, pp. 211--252, 2015.

\bibitem{girdhar2023imagebind}
R.~Girdhar, A.~El-Nouby, Z.~Liu, M.~Singh, K.~V. Alwala, A.~Joulin, and I.~Misra, ``Imagebind: One embedding space to bind them all,'' in \emph{IEEE/CVF Conf. Comput. Vis. Pattern Recog.}, 2023, pp. 15\,180--15\,190.

\bibitem{Touvron2020TrainingDI}
H.~Touvron, M.~Cord, M.~Douze, F.~Massa, A.~Sablayrolles, and H.~J{\'e}gou, ``Training data-efficient image transformers \& distillation through attention,'' in \emph{Int. Conf. Mach. Learn.}, 2021, pp. 10\,347--10\,357.

\bibitem{Liu2019RoBERTaAR}
\BIBentryALTinterwordspacing
Y.~Liu, M.~Ott, N.~Goyal, J.~Du, M.~Joshi, D.~Chen, O.~Levy, M.~Lewis, L.~Zettlemoyer, and V.~Stoyanov, ``Roberta: A robustly optimized bert pretraining approach,'' \emph{ArXiv}, vol. abs/1907.11692, 2019. [Online]. Available: \url{https://api.semanticscholar.org/CorpusID:198953378}
\BIBentrySTDinterwordspacing

\end{thebibliography}


\begin{IEEEbiography}[{\includegraphics[width=1in,height=1.25in,clip,keepaspectratio]{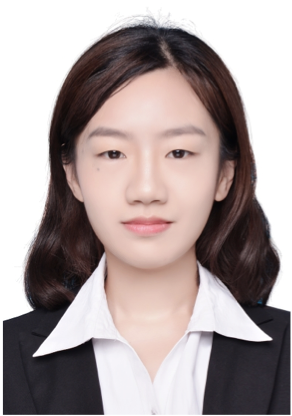}}]{Mengke Li}
received the B.S. degree in communication engineering from Southwest University, Chongqing, China, in 2015, the M.S. degree in signal and information processing from Xidian University, Xi’an, China, in 2018, and the Ph.D. degree from Hong Kong Baptist University, Hong Kong SAR, China, in 2022. She is currently an Assistant Professor with College of Computer Science and Software Engineering, Shenzhen University, Shenzhen, China. Her current research interests include imbalanced data learning, long-tail learning and computer vision.
\end{IEEEbiography}

\begin{IEEEbiography}[{\includegraphics[width=1in,height=1.25in,clip,keepaspectratio]{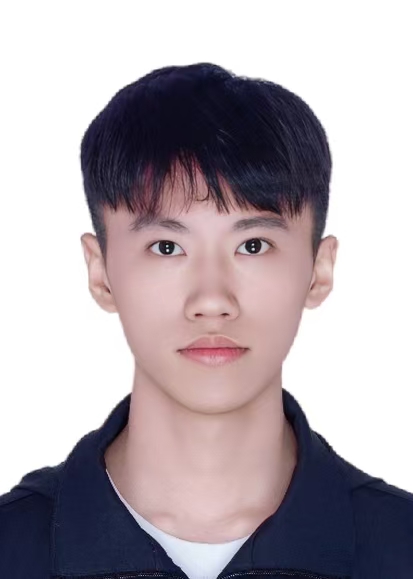}}]{Lihao Chen}
received the B.S. degree in Computer Science and Technology from Central China Normal University, Wuhan, China, in 2024. He is currently working toward the Mphil degree with Guangdong Laboratory of Artificial Intelligence and Digital Economy (Shenzhen), Shenzhen University, Guangdong, China, under the supervision of Mengke Li. His current research directions are computer vision and 3D point cloud analysis.
\end{IEEEbiography}

\begin{IEEEbiography}[{\includegraphics[width=1in,height=1.25in,clip,keepaspectratio]{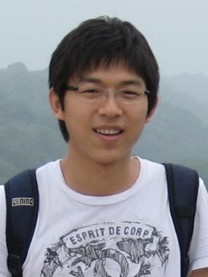}}]{Peng Zhang}
received the B.S. degree in electronic and information engineering, M.S. and Ph.D. degrees in signal and information processing from Xidian University, Xi’an, China, in 2006, 2009 and 2012 respectively. He is currently a Professor at National Key Lab. of Radar Signal Processing, Xidian University. His main research interests are SAR image interpretation and statistical learning theory.
\end{IEEEbiography}

\begin{IEEEbiography}[{\includegraphics[width=1in,height=1.25in,clip,keepaspectratio]{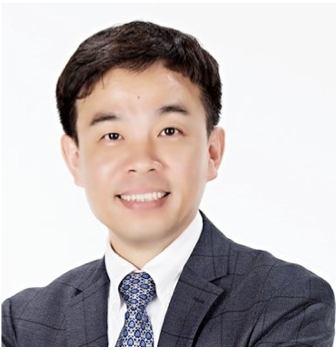}}]{Yiu-ming Cheung}
(SM'06-F'18) received the Ph.D. degree from the Department of Computer Science and Engineering at The Chinese University of Hong Kong in Hong Kong. He is a Fellow of IEEE, AAAS, IAPR, IET and BCS. He is a Chair Professor (Artificial Intelligence) of the Department of Computer Science, Hong Kong Baptist University, Hong Kong SAR, China. His research interests include machine learning and visual computing, data science, pattern recognition, multi-objective optimization, and information security. He is currently the Editor-in-Chief of IEEE Transactions on Emerging Topics in Computational Intelligence. Also, he serves as an Associate Editor for IEEE Transactions on Cybernetics, IEEE Transactions on Cognitive and Developmental Systems, IEEE Transactions on Neural Networks and Learning Systems (2014-2020), Pattern Recognition and Neurocomputing, to name a few. For details, please refer to: \href{https://www.comp.hkbu.edu.hk/~ymc}{https://www.comp.hkbu.edu.hk/\textasciitilde ymc}.
\end{IEEEbiography}

\begin{IEEEbiography}[{\includegraphics[width=1in,height=1.25in,clip,keepaspectratio]{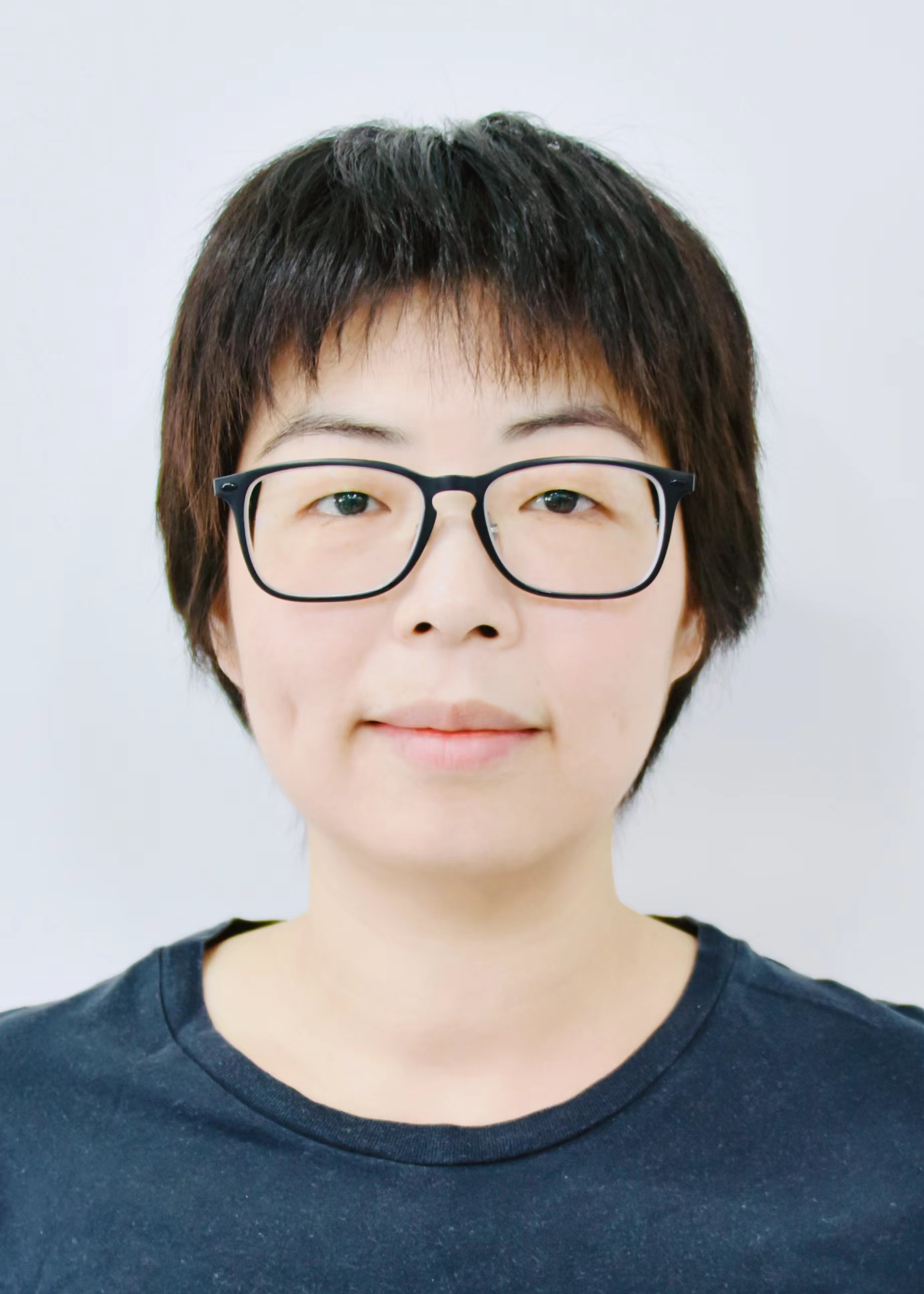}}]{Hui Huang}
received the Ph.D. degree in math from The University of British Columbia in 2008. She is Chair Professor of Computer Science at Shenzhen University, serving as the Dean of College of Computer Science and Software Engineering while also directing the Visual Computing Research Center. Her research encompasses computer graphics, computer vision and visual analytics, focusing on geometry, points, shapes and images. She is currently on the editorial board of ACM TOG and IEEE TVCG.
\end{IEEEbiography}


\end{document}